\newtheorem{theorem}{Theorem}
\newtheorem{lemma}{Lemma}
\DeclareMathOperator*{\argmin}{arg\,min}
\title{Equivariant flow matching}
\author{%
Leon Klein\\ 
    Freie Universität Berlin\\
    \texttt{leon.klein@fu-berlin.de} \\
\And
Andreas Krämer \\
    Freie Universität Berlin\\
    \texttt{andreas.kraemer@fu-berlin.de} \\
\And
Frank Noé \\
    Microsoft Research AI4Science\\
    Freie Universität Berlin\\
    Rice University\\
    \texttt{franknoe@microsoft.com} \\
}
\begin{document}

\maketitle

\begin{abstract}
Normalizing flows are a class of deep generative models that are especially interesting for modeling probability distributions in physics, where the exact likelihood of flows allows reweighting to known target energy functions and computing unbiased observables. For instance, Boltzmann generators tackle the long-standing sampling problem in statistical physics by training flows to produce equilibrium samples of many-body systems such as small molecules and proteins. To build effective models for such systems, it is crucial to incorporate the symmetries of the target energy into the model, which can be achieved by equivariant continuous normalizing flows (CNFs). However, CNFs can be computationally expensive to train and generate samples from, which has hampered their scalability and practical application.
In this paper, we introduce equivariant flow matching, a new training objective for equivariant CNFs that is based on the recently proposed optimal transport flow matching. Equivariant flow matching exploits the physical symmetries of the target energy for efficient, simulation-free training of equivariant CNFs.
We demonstrate the effectiveness of flow matching on rotation and permutation invariant many-particle systems and a small molecule, alanine dipeptide, where for the first time we obtain a Boltzmann generator with significant sampling efficiency without relying on tailored internal coordinate featurization. Our results show that the equivariant flow matching objective yields flows with shorter integration paths, improved sampling efficiency, and higher scalability compared to existing methods.
\end{abstract}

\section{Introduction}
\begin{figure}[t]
  \centering
  \includegraphics[width=\textwidth]{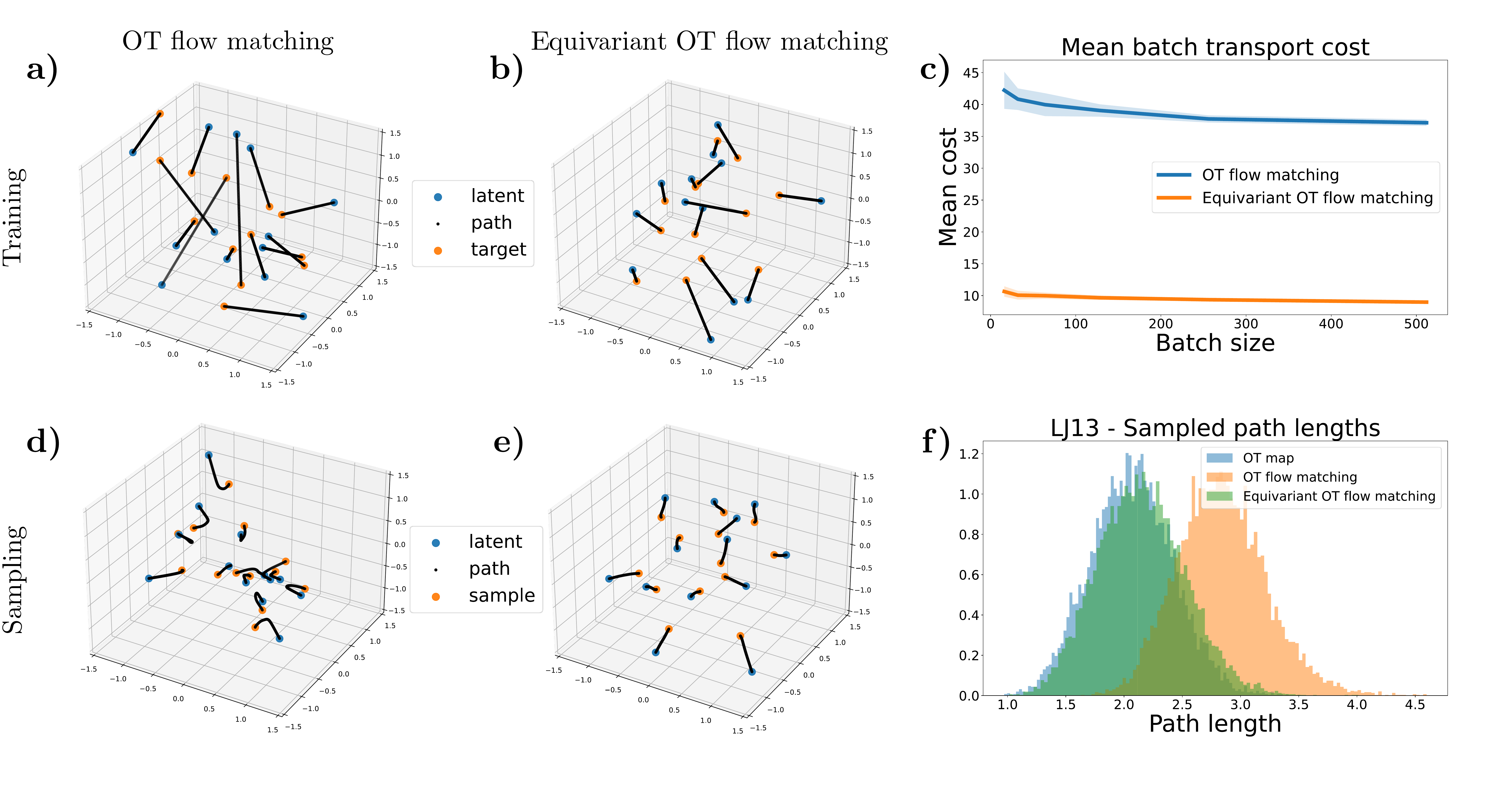}
  \caption{Results for the 13 particle Lennard-Jones system (LJ13) for different flow matching training methods. (a, b) Sample pairs generated with the different flow matching objectives during training. (c) Mean transport cost (squared distance) for training batches. (d, e) Integration paths per particle for samples generated by models trained with OT flow matching and equivariant OT flow matching, respectively. (f) Integration path length, i.e. arc length between prior and output sample, distribution compared with the OT path length between the prior and push-forward distribution of the flows.}
  \label{fig:intro}
\end{figure}
Generative models have achieved remarkable success across various domains, including images \cite{kingma2014semi, dinh16_densit_estim_using_real_nvp, NEURIPS2018_d139db6a, karras2020analyzing}, language \cite{vaswani17_atten_is_all_you_need, devlin2018bert, floridi2020gpt}, and applications in the physical sciences \cite{noe2019boltzmann, jumper2021highly, watson2022broadly}.
Among the rapidly growing subfields in generative modeling, normalizing flows have garnered significant interest. Normalizing flows \cite{tabak2010density, tabak2013family, papamakarios19_normal_flows_probab_model_infer, RezendeEtAl_NormalizingFlows} are powerful exact-likelihood generative neural networks that transform samples from a simple prior distribution into samples that follow a desired target probability distribution.
Previous studies \cite{PhysRevLett.125.121601, NEURIPS2021_581b41df, kohler2020equivariant, satorras2021n, kohler2023rigid} have emphasized the importance of incorporating symmetries of the target system into the flow model. In this work, we focus specifically on many-body systems characterized by configurations $x\in \mathbb{R}^{N \times D}$ of $N$ particles in $D$ spatial dimensions.
The symmetries of such systems arise from the invariances of the potential energy function $U(x)$. The associated probability distribution, known as the Boltzmann distribution, is given by 
\begin{equation}
    \mu(x)\propto \exp{\left(-\frac{U(x)}{k_B T}\right)},
\end{equation} 
where $k_B$ represents the Boltzmann constant and $T$ denotes the temperature.

Generating equilibrium samples from the Boltzmann distribution is a long-standing problem in statistical physics, typically addressed using iterative methods like Markov Chain Monte Carlo or Molecular Dynamics (MD).
In contrast, Boltzmann generators (BGs) \cite{noe2019boltzmann} utilize normalizing flows to directly generate independent samples from the Boltzmann distribution. Moreover, they allow the reweighting of the generated density to match the unbiased target Boltzmann distribution $\mu$.
When dealing with symmetric densities, which are ubiquitous in physical systems, BGs employ two main approaches: (i) describing the system using internal coordinates \cite{noe2019boltzmann, kohler2021smooth} or (ii) describing the system in Cartesian coordinates while incorporating the symmetries into the flow through equivariant models \cite{kohler2020equivariant}. 
In this work, we focus on the latter approach, which appears more promising as such architectures can in principle generalize across different molecules and do not rely on tailored system-specific featurizations.
However, current equivariant Boltzmann generators based on continuous normalizing flows (CNFs) face limitations in scalability due to their high computational cost during both training and inference. Recently, flow matching \cite{lipman2022flow} has emerged as a fast, simulation-free method for training CNFs. A most recent extension, optimal transport (OT) flow matching \cite{tong2023conditional}, enables learning optimal transport maps, which facilitates fast inference through simple integration paths.
In this work, we apply OT flow matching to train flows for highly symmetric densities and find that the required batch size to approximate the OT map adequately can become prohibitively large (\cref{fig:intro}c). Thus, the learned flow paths will deviate strongly from the OT paths (\cref{fig:intro}d), which increases computational cost and numerical errors.
We tackle this problem by proposing a novel \textit{equivariant} OT flow matching objective for training equivariant flows on invariant densities, resulting in optimal paths for faster inference (see \cref{fig:intro}e,f).

Our main contributions in this work are as follows:
\begin{compactenum}
    \item We propose a novel flow matching objective designed for invariant densities, yielding nearly optimal integration paths. Concurrent work of \cite{song2023equivariant} proposes nearly the same objective with the same approximation method, which they also call equivariant flow matching.
    \item We compare different flow matching objectives to train equivariant continuous normalizing flows. Through our evaluation, we demonstrate that only our proposed equivariant flow matching objective enables close approximations to the optimal transport paths for invariant densities. However, our proposed method is most effective for larger highly symmetric systems, while achieving sometimes inferior results for the smaller systems compared to optimal transport flow matching. 
    \item We introduce a new invariant dataset of alanine dipeptide and a large Lennard-Jones cluster. These datasets serve as valuable resources for evaluating the performance of flow models on invariant systems.
    \item We present the first Boltzmann Generator capable of producing samples from the equilibrium Boltzmann distribution of a molecule in Cartesian coordinates. Additionally, we demonstrate the reliability of our generator by accurately estimating the free energy difference, in close agreement with the results obtained from umbrella sampling. Concurrent work of \cite{midgley2023se} also introduce a Boltzmann Generator, based on coupling flows instead of CNFs, in Cartesian coordinates for alanine dipeptide. However, they investigate alanine dipeptide at $T=800K$ instead of room temperature. 
\end{compactenum}

\section{Related work}
Normalizing flows and Boltzmann generators \citep{noe2019boltzmann} have been applied to molecular sampling and free energy estimation \citep{dibak2021temperature, kohler2021smooth, wirnsberger2020targeted, midgley2022flow, ding2021deepbar}. Previous flows for molecules only achieved significant sampling efficiency when employing either system-specific featurization such as internal coordinates \cite{noe2019boltzmann, wu2020snf, dibak2021temperature, kohler2021smooth, ding2021computing, rizzi2023multimap} or prior distributions close to the target distribution \cite{rizzi2021targeted, rizzi2023multimap, invernizzi2022skipping}. Notably, our equivariant OT flow matching method could also be applied in such scenarios, where the prior distribution is sampled by MD at a different thermodynamic state (e.g., higher temperature or lower level of theory).
A flow model for molecules in Cartesian coordinates has been developed in \cite{klein2023timewarp} where a transferable coupling flow is used to sample small peptide conformations by proposing iteratively large time steps instead of sampling from the target distribution directly. 
Equivariant diffusion models \citep{pmlr-v162-hoogeboom22a, xu2022geodiff} learn a score-based model to generate molecular conformations. The score-based model is parameterized by an equivariant function similar to the vector field used in equivariant CNFs.
However, they do not target the Boltzmann distribution. As an exception, \cite{jing2022torsional} propose a diffusion model in torsion space and use the underlying probability flow ODE as a Boltzmann generator. Moreover, \cite{schreiner2023implicit} use score-based models to learn the transition probability for multiple time-resolutions, accurately capturing the dynamics. 

To help speed up CNF training and inference, various authors \cite{finlay2020train, tong2020trajectorynet, onken2021ot} have proposed incorporating regularization terms into the likelihood training objective for CNFs to learn an optimal transport (OT) map. While this approach can yield flows similar to OT flow matching, the training process itself is computationally expensive, posing limitations on its scalability.
The general idea underlying flow matching was independently conceived by different groups \cite{lipman2022flow, albergo2023building, liu2023flow} and soon extended to incorporate OT \cite{tong2023conditional, pooladian2023multisample}. OT maps with invariances have been studied previously to map between learned representations \cite{alvarez19towards}, but have not yet been applied to generative models.
Finally, we clarify that we use the term flow matching exclusively to refer to the training method for flows introduced by Lipman et al. \cite{lipman2022flow} rather than the flow-matching method for training energy-based coarse-grained models from flows introduced at the same time \cite{kohler2023flow}. 

\section{Method}
In this section, we describe the key methodologies used in our study, highlighting important prior work. 

\subsection{Normalizing flows}
Normalizing flows \cite{RezendeEtAl_NormalizingFlows, papamakarios2021normalizing} provide a powerful framework for learning complex probability densities $\mu(x)$ by leveraging the concept of invertible transformations. These transformations, denoted as $f_{\theta}:\mathbb{R}^n \to \mathbb{R}^n$, map samples from a simple prior distribution $q(x_0)=\mathcal{N}(x_0|0,I)$ to samples from a more complicated output distribution. This resulting distribution $p(x_1)$, known as the \textit{push-forward} distribution, is obtained using the \textit{change of variable} equation:
\begin{equation}
    p(x_1)=q\left(f^{-1}_{\theta}(x_1)\right)\det\left| \frac{\partial f_{\theta}^{-1}(x_1)}{\partial x_1}\right|,
\end{equation}
where $\frac{\partial f_{\theta}^{-1}(x_1)}{\partial x_1}$ denotes the Jacobian of the inverse transformation $f^{-1}_{\theta}$. 

\subsection{Continuous normalizing flows (CNFs)} 
CNFs \cite{chen2018neural, grathwohl2018ffjord} are an instance of normalizing flows that are defined via the ordinary differential equation
\begin{equation}\label{eq:cnf-flow}
    \frac{d f^t_{\theta}(x)}{dt}=v_{\theta}\left(t, f^t_{\theta}(x)\right), \quad f^0_{\theta}(x) = x,    
\end{equation}
where $v_{\theta}(t, x):\mathbb{R}^{n} \times [0, 1] \rightarrow \mathbb{R}^{n}$ is a time dependent vector field. The time dependent flow $f^t_{\theta}(x)$ generates the probability flow path $\tilde{p}_t(x):[0,1]\times\mathbb{R}^n\to\mathbb{R}^+$ from the prior distribution $\tilde{p}_0(x)=q(x_0)$ to the push-forward distribution $\tilde{p}_1(x)$. We can generate samples from $\tilde{p}_t(x)$ by first sampling $x_0$ from the prior and then integrating
\begin{equation}\label{eq:cnf-likely}
    f^t_{\theta}(x) = x_0 + \int_0^{t} dt' v_{\theta}\left(t', f^{t'}_{\theta}(x)\right).
\end{equation}
The corresponding log density change is given by the continuous  change of variable equation 
\begin{equation}
    \log \tilde{p}_t(x)= \log q(x_0) - \int_0^{t} dt' \nabla \cdot v_{\theta}\left(t', f^{t'}_{\theta}(x)\right),
\end{equation}
where $\nabla \cdot v_{\theta}\left(t', f^{t'}_{\theta}(x)\right)$ is the trace of the Jacobian.

\subsection{Likelihood-based training}
CNFs can be trained using a likelihood-based approach. In this training method, the loss function is formulated as the negative log-likelihood 
\begin{equation}\label{eq:nll}
    \mathcal{L}_{\mathrm{NLL}}(\theta)=\mathbb{E}_{x_1\sim \mu} \left[
-\log q\left( f_{\theta}^{-1}(x_1) \right)
- \log\left|\det\frac{\partial f_{\theta}^{-1}(x_1)}{\partial x_1}\right|
\right].
\end{equation}
Evaluating the inverse map requires propagating the samples through the reverse ODE, which involves numerous evaluations of the vector field for integration in each batch. Additionally, evaluating the change in log probability involves computing the expensive trace of the Jacobian.

\subsection{Flow matching}\label{sec:flow-matching}
In contrast to likelihood-based training, flow matching allows training CNFs simulation-free, i.e. without integrating the vector field and evaluating the Jacobian, making the training significantly faster (see \cref{appendix:computational-cost}). Hence, the flow matching enables scaling to larger models and larger systems with the same computational budget.
The flow matching training objective \cite{lipman2022flow, tong2023conditional} regresses   $v_{\theta}(t, x)$ to some target vector field $u_t(x)$ via
\begin{equation}\label{eq:fm-loss}
    \mathcal{L}_{\mathrm{FM}}(\theta)=\mathbb{E}_{t\sim[0,1],x \sim p_t(x)}\left|\left|v_{\theta}(t, x)- u_t(x)\right|\right|_2^2.   
\end{equation}
In practice, $p_t(x)$ and $u_t(x)$ are intractable,  but it has been shown in \cite{lipman2022flow} that the same gradients can be obtained by using the conditional flow matching loss:
\begin{equation}\label{eq:conditional-fm-loss}
    \mathcal{L}_{\mathrm{CFM}}(\theta)=\mathbb{E}_{t\sim[0,1], x\sim p_t(x|z)}\left|\left|v_{\theta}(t, x)- u_t(x|z)\right|\right|_2^2. 
\end{equation}
The vector field $u_t(x)$ and corresponding probability path $p_t(x)$ are given in terms of the conditional vector field $u_t(x|z)$ as well as the conditional probability path $p_t(x|z)$ as
\begin{align}
    p_t(x) = \int p_t(x|z) p(z) dz \quad \textrm{and}\quad
    u_t(x) = \int \frac{p_t(x|z) u_t(x|z)}{p_t(x)} p(z) dz,
\end{align}
where $p(z)$ is some arbitrary conditioning distribution independent of $x$ and $t$.
For a derivation see \cite{lipman2022flow} and \cite{tong2023conditional}.
There are different ways to efficiently parametrize $u_t(x|z)$ and $p_t(x|z)$. We here focus on a parametrization that gives rise to the optimal transport path, as introduced in \cite{tong2023conditional}
\begin{align}
z = (x_0, x_1) \quad &\textrm{and}\quad p(z) = \pi(x_0,x_1) \\
u_t(x | z) = x_1 - x_0 \quad &\textrm{and}\quad p_t(x | z) = \mathcal{N}(x | t \cdot x_1 + (1 - t) \cdot x_0, \sigma^2),
\end{align}
where the conditioning distribution is given by the 2-Wasserstein optimal transport map $\pi(x_0,x_1)$ between the prior $q(x_0)$ and the target $\mu(x_1)$. The 2-Wasserstein optimal transport map is defined by the 2-Wasserstein distance
\begin{align}
    W_2^2=\inf_{\pi \in \Pi} \int c(x_0,x_1)\pi(dx_0,dx_1),
\end{align}
where $\Pi$ is the set of couplings as usual and $c(x_0,x_1)=||x_0 - x_1||_2^2$ is the squared Euclidean distance in our case. 
Following \cite{tong2023conditional}, we approximate $\pi(x_0,x_1)$ by only considering a batch of prior and target samples.   
Hence, for each batch we generate samples from $p(z)$ as follows:
\begin{compactenum}
    \item sample batches of points $(x^1_0,\dots,x^B_0)\sim q$ and $(x^1_1,\dots,x^B_1)\sim\mu$,
    \item compute the cost matrix $M$ for the batch, i.e. $M_{ij}=||x^i_0 - x^j_1||_2^2$,
    \item solve the discrete OT problem defined by $M,$
    \item generate training pairs $z^i = (x^i_0, x^i_1)$ according to the OT solution.
\end{compactenum}

We will refer to this training procedure as \textit{OT flow matching}.


\subsection{Equivariant flows}
Symmetries can be described in terms of a \textit{group} $G$ acting on a finite-dimensional vector space $V$ via a matrix representation $\rho(g);g\in G$.  A map $I: V \to V'$ is called $G$-invariant if $I(\rho(g) x) = I(x)$ for all $g \in G$ and $x \in V$. Similarly, a map $f: V \to V$ is called $G$-equivariant if $f(\rho(g) x) = \rho(g) f(x)$ for all $g \in G$ and $x \in V$.

In this work, we focus on systems with energies $U(x)$ that exhibit invariance under the following symmetries: (i) \textit{Permutations} of interchangeable particles, described by the symmetric group $S(N')$ for each interchangeable particle group. (ii) \textit{Global rotations and reflections}, described by the orthogonal group $O(D)$ (iii) \textit{Global translations}, described by the translation group $\mathbb{T}$. These symmetries are commonly observed in many particle systems, such as molecules or materials.


In \cite{kohler2020equivariant}, it is shown that equivariant CNFs can be constructed using an equivariant vector field $v_{\theta}$ (Theorem 2). 
Moreover, \cite{kohler2020equivariant, Rezende2019EquivariantHF} show that the push-forward distribution $\tilde{p}(x_1)$ of a $G$-equivariant flow with a $G$-invariant prior density is also $G$-invariant (Theorem 1).
Note that this is only valid for orthogonal maps, and hence not for translations. However, translation invariance can easily be achieved by assuming mean-free systems as proposed in \cite{kohler2020equivariant}.
An additional advantage of mean-free systems is that global rotations are constrained to occur around the origin. By ensuring that the flow does not modify the geometric center and the prior distribution is mean-free, the resulting push-forward distribution will also be mean-free.

Although equivariant flows can be successfully trained with the OT flow matching objective, the trained flows display highly curved vector fields (\cref{fig:intro}d) that do not match the linear OT paths.  Fortunately, the OT training objective can be modified so that it yields better approximations to the OT solution for finite batch sizes.

\section{Equivariant optimal transport flow matching}\label{sec:eq-ot-flow-matching}
Prior studies \cite{fatras2021minibatch, tong2023conditional} indicate that medium to small batch sizes are often sufficient for effective OT flow matching. However, when dealing with highly symmetric densities, accurately approximating the OT map may require a prohibitively large batch size. This is particularly evident in cases involving permutations, where even for small system sizes, it is unlikely for any pair of target-prior samples $(x_0, x_1)$ to share the exact same permutation. This is because the number of possible permutations scales with the number of interchangeable particles as $N!$, while the number of combinations of the sample pairs scales only with squared batch size (\cref{fig:intro}).
To address this challenge, we propose using a cost function 
\begin{align}\label{eq:invaraint-cost-function-main}
\tilde{c}(x_0, x_1) = \min_{g\in G}||x_0-\rho(g)x_1||^2_2,
\end{align}
that accurately accounts for the underlying symmetries of the problem in the OT flow matching algorithm. Hence, instead of solely reordering the batch, we instead also align samples along their orbits.

We summarize or main theoretical findings in the following theorem.
\setcounter{theorem}{0}
\begin{theorem}\label{theorem-main}
Let $G$ be a compact group that acts on an Euclidean $n$-space by isometries. Let $T\colon x\mapsto y$ be an OT map between $G$-invariant measures $\nu_1$ and $\nu_2$, using the cost function $c$. Then
\begin{enumerate}
    \item $T$ is $G$-equivariant and the corresponding OT plan $\pi(\nu_1, \nu_2)$ is $G$-invariant.
    \item For all pairs $(x, T(x))$ and $y \in G\cdot T(x):$ 
    \begin{align}
        c(x, T(x)) = \int_G c(g\cdot x, g\cdot T(x)) d\mu(g) = \min_{g \in G} c(x,g \cdot y)
    \end{align}
    \item  T is also an OT map for the cost function $\tilde{c}$.   
\end{enumerate}
\end{theorem}

Refer to \cref{appendix:eq-ot-flow-matching-derivation} for an extensive derivation and discussion. The key insights can be summarized as follows:
(i) Given the $G$-equivariance of the target OT map $T$, it is natural to employ an $G$-equivariant flow model for its learning.
(ii) From 2. and 3., we can follow that our proposed cost function $\tilde{c}$, effectively aligns pairs of samples in a manner consistent with how they are aligned under the $G$-equivariant OT map $T$. 

In this work we focus on $O(D)$- and $S(N)$-invariant  distributions, which are common for molecules and multi-particle systems. The minimal squared distance for a pair of points $(x_0, x_1)$, taking into account these symmetries, can be obtained by minimizing the squared Euclidean distance over all possible combinations of rotations, reflections, and permutations
\begin{align}\label{eq:invaraint-cost-function}
\tilde{c}(x_0, x_1) = \min_{r\in O(D), s \in S(N)}||x_0-\rho(r s)x_1||^2_2.
\end{align}
However, computing the exact minimal squared distance is computationally infeasible in practice due to the need to search over all possible combinations of rotations and permutations. Therefore, we approximate the minimizer by performing a sequential search
\begin{align}\label{eq:approximated-cost-function}
\tilde{c}(x_0, x_1) = \min_{r\in SO(D)}||x_0-\rho(r \tilde{s})x_1||^2_2, \quad \tilde{s}= \argmin_{s\in S(N)}||x_0-\rho(s)x_1||^2_2.
\end{align}
We demonstrate in \cref{sec:experiments} that this approximation results in nearly OT integration paths for equivariant flows, even for small batch sizes. 
While we also tested other approximation strategies in \cref{appendix:approximation}, they did not yield significant changes in our results, but come at with additional computational overhead.
We hence alter the OT flow matching procedure as follows:
For each element of the cost matrix $M$, we first compute the optimal permutation with the Hungarian algorithm \cite{kuhn1955hungarian} and then align the two samples through rotation with the Kabsch algorithm \cite{kabsch1976solution}. 
The other steps of the OT flow matching algorithm remain unchanged. 
We will refer to this loss as \textit{Equivariant OT flow matching}. Although aligning samples in that way comes with a significant computational overhead, this reordering can be done in parallel before or during training (see \cref{appendix:parallel-batch-generation}).  
It is worth noting that in the limit of infinite batch sizes, the Euclidean cost will still yield the correct OT map for invariant densities.

\section{Architecture}\label{sec:architecture}
The vector field $v_{\theta}(t,x)$ is parametrized by an $O(D)$- and $S(N)$-equivariant graph neural network, similar to the one used in  \cite{satorras2021n} and introduced in \cite{satorras2021graph}. The graph neural network consists of $L$ consecutive layers.
The update for the $i$-th particle is computed as follows
\begin{align}
    h_i^0 &= (t, a_i), \quad m_{ij}^l=\phi_e\left(h_i^l, h_j^l, d_{ij}^2 \right),\\
    x_i^{l+1}&=x_i^l+\sum_{j\neq i} \frac{\left(x_i^l-x_j^l \right)}{d_{ij}+1}\phi_d(m_{ij}^l),\\
    h_i^{l+1}&=\phi_h\left(h_i^l, m_i^l\right), \quad  m_i^l=\sum_{j\neq i} \phi_m(m_{ij}^l) m_{ij}^l,\\
    v_{\theta}(t,x^0)_i &= x_i^L - x_i^0,
\end{align}
where $\phi_\alpha$ are neural networks, $d_{ij}$ is the Euclidean distance between particle $i$ and $j$, and $a_i$ is an embedding for the particle type.
Notably, the update conserves the geometric center if all particles are of the same type (see \cref{appendix:mean-free}), otherwise we subtract the geometric center after the last layer. 
This ensures that the resulting equivariant vector field $v_{\theta}(t,x)$ conserves the geometric center. When combined with a symmetric mean-free prior distribution, the push-forward distribution of the CNF will be $O(D)$- and $S(N)$-invariant.

\section{Experiments}\label{sec:experiments}
\begin{table}
  \caption{Comparison of flows trained with different training objectives. Errors are computed over three runs.}
  \label{tab:dataset-evaluations}
  \centering
  \begin{tabular}{lccc}
    \toprule
    \textbf{Training type} &  \textbf{NLL $(\downarrow)$} & \textbf{ESS $(\uparrow)$} & \textbf{Path length $(\downarrow)$} \\
    \cmidrule{2-4}
        & \multicolumn{3}{c}{DW4}  \\
    \cmidrule{2-4}   
    Likelihood \cite{satorras2021n} & $1.72\pm0.01$ & $86.87\pm0.19\%$ &$3.11\pm0.04$ \\
    OT flow matching & $1.70\pm0.02$ & $\mathbf{92.37\pm0.89\%}$& $2.94\pm0.02$\\
    Equivariant OT flow matching &$1.68\pm0.01$  &$88.71\pm0.40\%$ & $2.92\pm0.01$\\
    \cmidrule{2-4}
        & \multicolumn{3}{c}{LJ13} \\
    \cmidrule{2-4}    
    Likelihood \cite{satorras2021n} &  $-15.83\pm0.07$  &$39.78 \pm6.19 \%$  & $5.08\pm0.22$ \\
    OT flow matching   &$-16.09\pm0.03$ & $54.36 \pm 5.43 \%$ & $2.84\pm0.01$\\
    Equivariant OT flow matching& $-16.07\pm0.02$ &$\mathbf{57.98\pm 2.19\%}$ &$\mathbf{2.15\pm0.01}$\\
    \cmidrule{2-4}
        & \multicolumn{3}{c}{LJ55} \\
    \cmidrule{2-4}    
    OT flow matching   &$-88.45\pm 0.04$ & $3.74\pm 1.06 \%$ & $7.53\pm0.02$\\
    Equivariant OT flow matching& $\mathbf{-89.27\pm0.04}$ &$\mathbf{4.42\pm 0.35\%}$ &$\mathbf{3.52\pm 0.01}$\\
    \cmidrule{2-4}
        & \multicolumn{3}{c}{Alanine dipeptide} \\
    \cmidrule{2-4}
    OT flow matching   &$\mathbf{-107.54\pm0.03}$&$\mathbf{1.41\pm0.04 \%}$&$10.19\pm0.03$\\
    Equivariant OT flow matching& $-106.78\pm0.02$&$0.69\pm 0.05\%$&$\mathbf{9.46\pm0.01}$\\
    \bottomrule
  \end{tabular}
\end{table}

In this section, we demonstrate the advantages of equivariant OT flow matching over existing training methods using four datasets characterized by invariant energies. We explore three different training objectives: (i) likelihood-based training, (ii) OT flow matching, and (iii) equivariant OT flow matching.
For a comprehensive overview of experimental details, including dataset parameters, error bars, learning rate schedules, computing infrastructure, and additional experiments, please refer to \cref{appendix:further-experiments} and \cref{appendix:technical-details} in the supplementary material.
For all experiments, we employ a mean-free Gaussian prior distribution. The number of layers and parameters in the equivariant CNF vary across datasets while remaining consistent within each dataset (see \cref{appendix:technical-details}). We provide naïve flow matching as an additional baseline in \cref{appendix:naiive}. However, the results are very similar to the ones obtained with OT flow matching, although the integration paths are generally slightly longer.

\subsection{DW4 and LJ13}
We first evaluate the different loss functions on two many-particle systems, DW4 and LJ13, that were specifically designed for benchmarking equivariant flows as described in \cite{kohler2020equivariant}. These systems feature pair-wise double-well and Lennard-Jones interactions with $4$ and $13$ particles, respectively (see \cref{appendix:benchmark-systems} for more details).
While state-of-the-art results have been reported in \cite{satorras2021n}, their evaluations were performed on very small test sets and were biased for the LJ13 system. To provide a fair comparison, we retrain their model using likelihood-based training as well as the two flow matching losses on resampled training and test sets for both systems. Additionally, we demonstrate improved likelihood performance of flow matching on their biased test set in \cref{appendix:comparison-prior-work}.
The results, presented in \cref{tab:dataset-evaluations}, show that the two flow matching objectives outperform likelihood-based training while being computationally more efficient. The effective sample sizes (ESS) and negative log likelihood (NLL) are comparable for the flow matching runs. However, for the LJ13 system, the equivariant OT flow matching objective significantly reduces the integration path length compared to other methods due to the large number of possible permutations (refer to \cref{fig:intro} for a visual illustration).

\subsection{LJ55}
\begin{figure}[t]
  \centering
  \includegraphics[width=\textwidth]{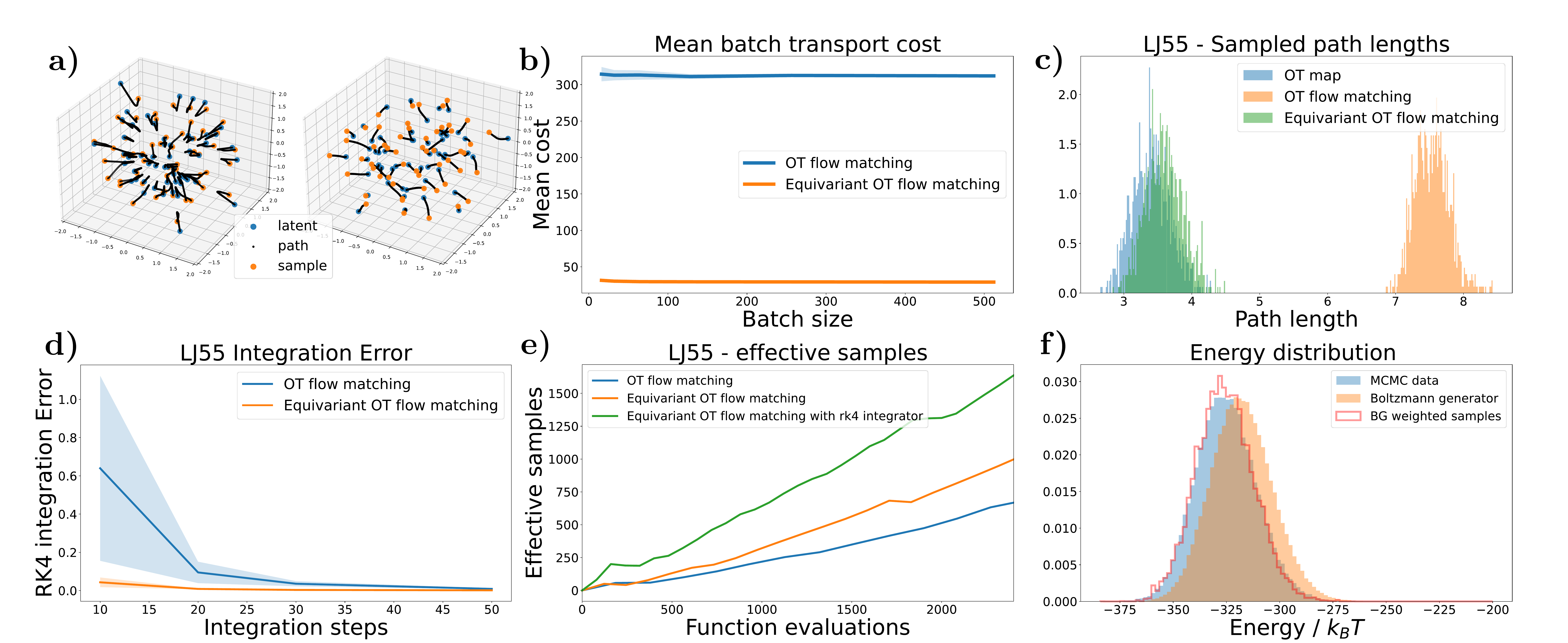}
  \caption{Results for the LJ55 system (a) Integration paths per particle for OT flow matching (left) and equivariant OT flow matching (right). (b) Mean transport cost (squared distance) for training batches. (c) Integration path length distribution. (d) Integration error for a fixed step size integrator (rk4) with respect to a reference solution generated by an adaptive solver (dropi5). (e) Effective samples vs number of function evaluations, i.e. evaluations of the vector field, for a sampling batch size of $1000$. (f) Energy histograms for a flow trained with equivariant OT flow matching.}
  \label{fig:LJ55}
\end{figure}
The effectiveness of equivariant OT flow matching becomes even more pronounced when training on larger systems, where likelihood training is infeasible (see \cref{appendix:computational-cost}). To this end, we investigate a large Lennard-Jones cluster with 55 particles (\textit{LJ55}). 

We observe that the mean batch transport cost of training pairs is about $10$ times larger for OT flow matching compared to equivariant OT flow matching (\cref{fig:LJ55}b), resulting in curved and twice as long integration paths during inference  (\cref{fig:LJ55}a,c). However, the integration paths for OT flow matching are shorter than those seen during training, see \cref{appendix:better-paths} for a more detailed discussion. We compare the integration error caused by using a fixed step integrator (rk4) instead of an adaptive solver (dropi5 \cite{dormand1980family}). As the integration paths follow nearly straight lines for the equivariant OT flow matching, the so resulting integration error is minimal, while the error is significantly larger for OT flow matching (\cref{fig:LJ55}d). Hence, we can use a fixed step integrator, with e.g. $20$ steps, for sampling for equivariant OT flow matching, resulting in a three times speed-up over OT flow matching (\cref{fig:LJ55}e), emphasizing the importance of accounting for symmetries in the loss for large, highly symmetric systems.
Moreover, the equivariant OT flow matching objective outperforms OT flow matching on all evaluation metrics (\cref{tab:dataset-evaluations}). To ensure that the flow samples all states, we compute the energy distribution (\cref{fig:LJ55}f) and perform deterministic structure minimization of the samples, similar to \cite{kohler2020equivariant}, in \cref{appendix:structure-minimisation}.
%
%


\subsection{Alanine dipeptide}\label{sec:alanine}
\begin{figure}[t]
  \centering
  \includegraphics[width=\textwidth]{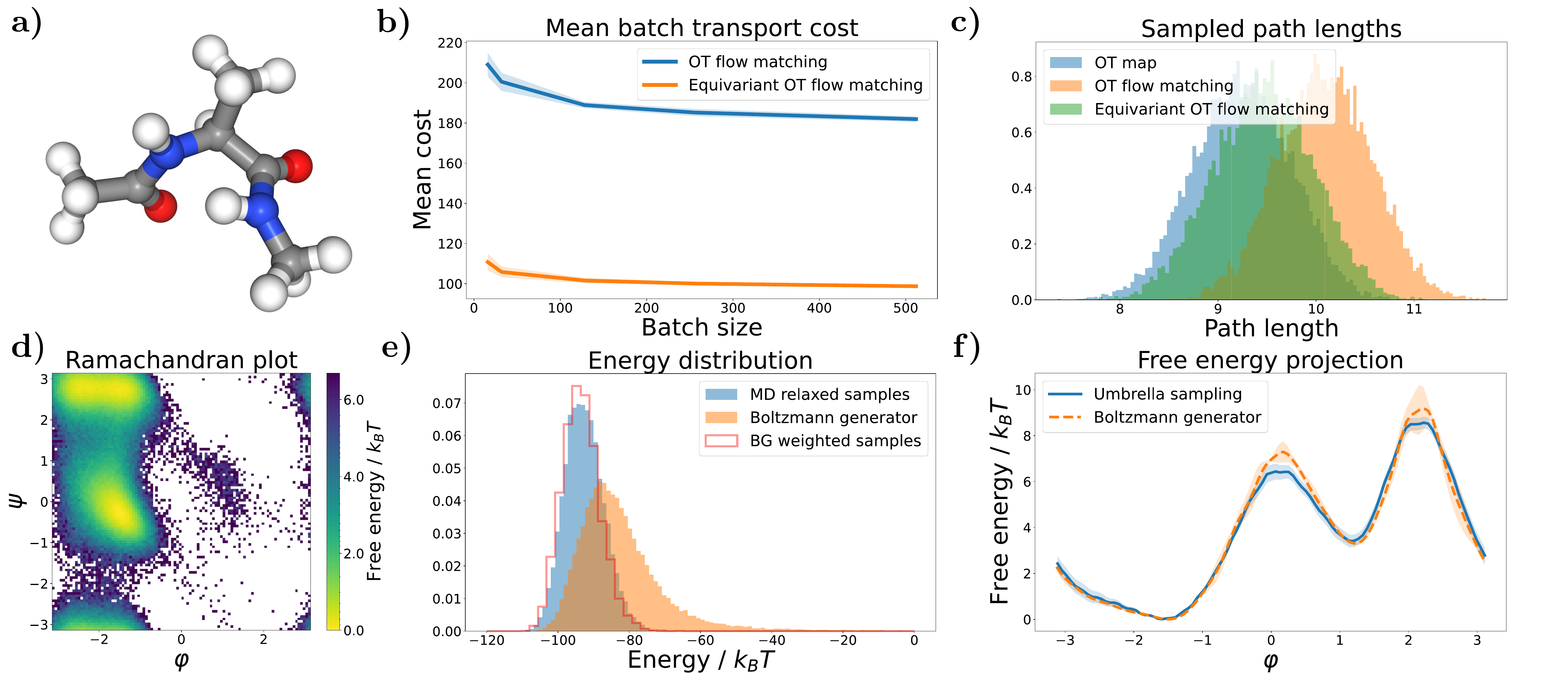}
  \caption{Results for the alanine dipeptide system (a) Alanine dipeptide molecule. (b) Mean transport cost (squared distance) for training batches. (c) Integration path length distribution. (d) Ramachandran plot depicting the generated joint marginal distribution over the backbone dihedral angles $\varphi$ and $\psi$ after filtering out samples with right-handed chirality and high energies. (e) Energy histograms for samples generated by a flow trained with OT flow matching. (f) Free energy distribution along the slowest transition ($\varphi$ dihedral angle) computed with umbrella sampling and the equivariant flow.}
  \label{fig:AD2}
\end{figure}
In our final experiment, we focus on the small molecule alanine dipeptide (\cref{fig:AD2}a) in Cartesian coordinates.
The objective is to train a Boltzmann Generator capable of sampling from the equilibrium Boltzmann distribution defined by the semi-empirical \textit{GFN2-xTB} force-field \cite{xtb}. This semi-empirical potential energy is invariant under permutations of atoms of the same chemical element and global rotations and reflections. However, here we consider the five backbone atoms defining the $\varphi$ and $\psi$ dihedral angles as distinguishable to facilitate analysis. Since running MD simulations with semi-empirical force-fields is computationally expensive, we employ a surrogate training set, further challenging the learning task. 

\paragraph{Alanine dipeptide data set generation}
The alanine dipeptide training data set is generated through two steps:
(i) Firstly, we perform an MD simulation using the classical \textit{Amber ff99SBildn} force-field for a duration of $1$ ms \cite{kohler2021smooth}.
(ii) Secondly, we relax $10^5$ randomly selected states from the MD simulation using the semi-empirical \textit{GFN2-xTB} force-field for $100$ fs each. For more detailed information, refer to \cref{appendix:benchmark-systems} in the supplementary material.
This training data generation is significantly cheaper than performing a long MD simulation with the semi-empirical force field. 

Although the Boltzmann generator is trained on a biased training set, we can generate asymptotically unbiased samples from the semi-empirical target distribution by employing reweighting (\cref{appendix:reweighting}) as demonstrated in \cref{fig:AD2}e.
In this experiment, OT flow matching outperforms equivariant OT flow matching in terms of effective sample size (ESS) and negative log likelihood (NLL) as shown in \cref{tab:dataset-evaluations}. However, the integration path lengths are still longer for OT flow matching compared to equivariant OT flow matching, as depicted in \cref{fig:AD2}c. 
Since the energy of alanine dipeptide is invariant under global reflections, the flow generates samples for both chiral states. 
While the presence of the mirror state reflects the symmetries of the energy, in practical scenarios, molecules typically do not change their chirality spontaneously. Therefore, it may be undesirable to have samples from both chiral states. However, it is straightforward to identify samples of the undesired chirality and apply a mirroring operation to correct them. Alternatively, one may use a SO(3) equivariant flow without reflection equivariance \cite{satorras2021graph} to prevent the generation of mirror states.

\paragraph{Alanine dipeptide - free energy difference}
The computation of free energy differences is a common challenge in statistical physics as it determines the relative stability of metastable states. In the specific case of alanine dipeptide, the transition between negative and positive $\varphi$ dihedral angle is the slowest process (\cref{fig:AD2}d), and equilibrating the free energy difference between these two states from MD simulation requires simulating numerous transitions, i.e. millions of consecutive MD steps, which is expensive for the semi-empirical force field. In contrast, we can train a Boltzmann generator from data that is not in global equilibrium, i.e. using our biased training data, which is significantly cheaper to generate. Moreover, as the positive $\varphi$ state is much less likely, we can even bias our training data to have nearly equal density in both states, which helps compute a more precise free energy estimate (see \cref{appendix:biasing}). The equivariant Boltzmann generator is trained using the OT flow matching loss, which exhibited slightly better performance for alanine dipeptide. We obtain similar results for the equivariant OT flow matching loss (see \cref{appendix:free-energy-difference}).
To obtain an accurate estimation of the free energy difference, five umbrella sampling simulations are conducted along the $\varphi$ dihedral angle using the semi-empirical force-field (see \cref{appendix:benchmark-systems}). The free energy difference estimated by the Boltzmann generator demonstrates good agreement with the results of these simulations (\cref{tab:free-energy-differences} and \cref{fig:AD2}f), whereas both the relaxed training data and classical Molecular Dynamics simulation overestimate the free energy difference.


\begin{table}
  \caption{Dimensionless free energy differences for the slowest transition of alanine dipeptide estimated from various methods. Umbrella sampling yields a converged reference solution. Errors over five runs.
  }
  \label{tab:free-energy-differences}
  \centering
  \begin{tabular}{lcccc}
    \toprule
    & MD & relaxed MD & Umbrella sampling & Boltzmann generator\\
    Free energy difference / $k_BT$ & $5.31$&$5.00$&$4.10\pm0.26$&$4.10\pm 0.08$\\
    \bottomrule
  \end{tabular}
\end{table}

\section{Discussion}
We have introduced a novel flow matching objective for training equivariant continuous normalizing flows on invariant densities, leading to optimal transport integration paths even for small training batch sizes.
By leveraging flow matching objectives, we successfully extended the applicability of equivariant flows to significantly larger systems, including the large Lennard-Jones cluster (LJ55). We conducted experiments comparing different training objectives on four symmetric datasets, and our results demonstrate that as the system size increases, the importance of accounting for symmetries within the flow matching objective becomes more pronounced. This highlights the critical role of our proposed flow matching objective in scaling equivariant CNFs to even larger systems.

Another notable contribution of this work is the first successful application of a Boltzmann generator to model alanine dipeptide in Cartesian coordinates. By accurately estimating free energy differences using a semi-empirical force-field, our approach of applying OT flow matching and equivariant OT flow matching to equivariant flows demonstrates its potential for reliable simulations of complex molecular systems.

\section{Limitations / Future work}
While we did not conduct experiments to demonstrate the transferability of our approach, the architecture and proposed loss function can potentially be used to train transferable models. 
We leave this for future research.
Although training with flow matching is faster and computationally cheaper than likelihood training for CNFs, the inference process still requires the complete integration of the vector field, which can be computationally expensive. However, if the model is trained with the equivariant OT flow matching objective, faster fixed-step integrators can be employed during inference.
Our suggested approximation of \cref{eq:invaraint-cost-function} includes the Hungarian algorithm, which has a computational complexity of $\mathcal{O}(N^3)$. To improve efficiency, this step could be replaced by heuristics relying on approximations to the Hungarian algorithm \cite{kurtzberg1962approximation}.
Moreover, flow matching does not allow for energy based training, as this requires integration similar to NLL training. A potential alternative approach is to initially train a CNF using flow matching with a small set of samples. Subsequent sample generation through the CNF, followed by reweighting to the target distribution, allows these samples to be added iteratively to the training set, similarly as in \cite{midgley2022flow,jing2022torsional}. 
Notably, in our experiments, we exclusively examined Gaussian prior distributions. However, flow matching allows to transform arbitrary distributions. Therefore, our equivariant OT flow matching method holds promise for application in scenarios where the prior distribution is sampled through MD at a distinct thermodynamic state, such as a higher temperature or a different level of theory \cite{rizzi2021targeted,invernizzi2022skipping}.
In these cases, where both the prior and target distributions are close, with samples alignable through rotations and permutations, we expect that the advantages of equivariant OT flow matching will become even more pronounced than what we observed in our experiments.

Building upon the success of training equivariant CNFs for larger systems using our flow matching objective, future work should explore different architectures for the vector field. Promising candidates, such as \cite{jing2020learning, schutt2021equivariant,liao2023equiformer,gasteiger2021gemnet,du2023new}, could be investigated to improve the modeling capabilities of equivariant CNFs. 
While our focus in this work has been on symmetric physical systems, it is worth noting that equivariant flows have applications in other domains that also exhibit symmetries, such as traffic data generation \cite{zwartsenberg2022conditional}, point cloud and set modeling \cite{bilos2021equivariant, li2020exchangeable}, as well as invariant distributions on arbitrary manifolds \cite{katsman2021equivariant}. Our equivariant OT flow matching approach can be readily applied to these areas of application without modification.


\section*{Acknowledgements}
We gratefully acknowledge support by the Deutsche Forschungsgemeinschaft (SFB1114, Projects No. C03, No. A04, and No. B08), the European Research Council (ERC CoG 772230 \textquotedblleft ScaleCell\textquotedblright ), the Berlin Mathematics
center MATH+ (AA1-6), and the German Ministry for Education and Research
(BIFOLD - Berlin Institute for the Foundations of Learning and Data). We gratefully acknowledge the computing time granted by the Resource Allocation Board and provided on the supercomputer Lise at NHR@ZIB and NHR@Göttingen as part of the NHR infrastructure.
We thank Maaike Galama, Michele Invernizzi, Jonas Köhler, and Max Schebek for insightful
discussions. Moreover, we would like to thank Lea Zimmermann and Felix Draxler, who tested our code and brought to our attention a critical bug.

\bibliographystyle{unsrt}
\bibliography{literature.bib}

\begin{thebibliography}{10}

\bibitem{kingma2014semi}
Durk~P Kingma, Shakir Mohamed, Danilo Jimenez~Rezende, and Max Welling.
\newblock Semi-supervised learning with deep generative models.
\newblock {\em Advances in neural information processing systems}, 27, 2014.

\bibitem{dinh16_densit_estim_using_real_nvp}
Laurent Dinh, Jascha Sohl-Dickstein, and Samy Bengio.
\newblock Density estimation using real {NVP}.
\newblock In {\em International Conference on Learning Representations}, 2017.

\bibitem{NEURIPS2018_d139db6a}
Durk~P Kingma and Prafulla Dhariwal.
\newblock Glow: Generative flow with invertible 1x1 convolutions.
\newblock In S.~Bengio, H.~Wallach, H.~Larochelle, K.~Grauman, N.~Cesa-Bianchi,
  and R.~Garnett, editors, {\em Advances in Neural Information Processing
  Systems}, volume~31. Curran Associates, Inc., 2018.

\bibitem{karras2020analyzing}
Tero Karras, Samuli Laine, Miika Aittala, Janne Hellsten, Jaakko Lehtinen, and
  Timo Aila.
\newblock Analyzing and improving the image quality of stylegan.
\newblock In {\em Proceedings of the IEEE/CVF conference on computer vision and
  pattern recognition}, pages 8110--8119, 2020.

\bibitem{vaswani17_atten_is_all_you_need}
Ashish Vaswani, Noam Shazeer, Niki Parmar, Jakob Uszkoreit, Llion Jones,
  Aidan~N Gomez, {\L}ukasz Kaiser, and Illia Polosukhin.
\newblock Attention is all you need.
\newblock {\em Advances in {N}eural {I}nformation {P}rocessing {S}ystems}, 30,
  2017.

\bibitem{devlin2018bert}
Jacob Devlin, Ming-Wei Chang, Kenton Lee, and Kristina Toutanova.
\newblock {BERT}: Pre-training of deep bidirectional transformers for language
  understanding.
\newblock In {\em Proceedings of the 2019 Conference of the North {A}merican
  Chapter of the Association for Computational Linguistics: Human Language
  Technologies, Volume 1 (Long and Short Papers)}, pages 4171--4186,
  Minneapolis, Minnesota, June 2019. Association for Computational Linguistics.

\bibitem{floridi2020gpt}
Luciano Floridi and Massimo Chiriatti.
\newblock Gpt-3: Its nature, scope, limits, and consequences.
\newblock {\em Minds and Machines}, 30:681--694, 2020.

\bibitem{noe2019boltzmann}
Frank No{\'e}, Simon Olsson, Jonas K{\"o}hler, and Hao Wu.
\newblock Boltzmann generators --- sampling equilibrium states of many-body
  systems with deep learning.
\newblock {\em Science}, 365:eaaw1147, 2019.

\bibitem{jumper2021highly}
John Jumper, Richard Evans, Alexander Pritzel, Tim Green, Michael Figurnov,
  Olaf Ronneberger, Kathryn Tunyasuvunakool, Russ Bates, Augustin
  {\v{Z}}{\'\i}dek, Anna Potapenko, et~al.
\newblock Highly accurate protein structure prediction with alphafold.
\newblock {\em Nature}, 596(7873):583--589, 2021.

\bibitem{watson2022broadly}
Joseph~L. Watson, David Juergens, Nathaniel~R. Bennett, Brian~L. Trippe, Jason
  Yim, Helen~E. Eisenach, Woody Ahern, Andrew~J. Borst, Robert~J. Ragotte,
  Lukas~F. Milles, Basile I.~M. Wicky, Nikita Hanikel, Samuel~J. Pellock,
  Alexis Courbet, William Sheffler, Jue Wang, Preetham Venkatesh, Isaac
  Sappington, Susana~V{\'a}zquez Torres, Anna Lauko, Valentin~De Bortoli, Emile
  Mathieu, Regina Barzilay, Tommi~S. Jaakkola, Frank DiMaio, Minkyung Baek, and
  David Baker.
\newblock Broadly applicable and accurate protein design by integrating
  structure prediction networks and diffusion generative models.
\newblock {\em bioRxiv}, 2022.

\bibitem{tabak2010density}
Esteban~G Tabak, Eric Vanden-Eijnden, et~al.
\newblock Density estimation by dual ascent of the log-likelihood.
\newblock {\em Communications in Mathematical Sciences}, 8(1):217--233, 2010.

\bibitem{tabak2013family}
Esteban~G Tabak and Cristina~V Turner.
\newblock A family of nonparametric density estimation algorithms.
\newblock {\em Communications on Pure and Applied Mathematics}, 66(2):145--164,
  2013.

\bibitem{papamakarios19_normal_flows_probab_model_infer}
George Papamakarios, Eric Nalisnick, Danilo~Jimenez Rezende, Shakir Mohamed,
  and Balaji Lakshminarayanan.
\newblock Normalizing flows for probabilistic modeling and inference.
\newblock {\em CoRR}, 2019.

\bibitem{RezendeEtAl_NormalizingFlows}
Danilo Rezende and Shakir Mohamed.
\newblock Variational inference with normalizing flows.
\newblock In {\em International conference on machine learning}, pages
  1530--1538. PMLR, 2015.

\bibitem{PhysRevLett.125.121601}
Gurtej Kanwar, Michael~S. Albergo, Denis Boyda, Kyle Cranmer, Daniel~C.
  Hackett, S\'ebastien Racani\`ere, Danilo~Jimenez Rezende, and Phiala~E.
  Shanahan.
\newblock Equivariant flow-based sampling for lattice gauge theory.
\newblock {\em Phys. Rev. Lett.}, 125:121601, Sep 2020.

\bibitem{NEURIPS2021_581b41df}
Isay Katsman, Aaron Lou, Derek Lim, Qingxuan Jiang, Ser~Nam Lim, and
  Christopher~M De~Sa.
\newblock Equivariant manifold flows.
\newblock In M.~Ranzato, A.~Beygelzimer, Y.~Dauphin, P.S. Liang, and J.~Wortman
  Vaughan, editors, {\em Advances in Neural Information Processing Systems},
  volume~34, pages 10600--10612. Curran Associates, Inc., 2021.

\bibitem{kohler2020equivariant}
Jonas K{\"o}hler, Leon Klein, and Frank No{\'e}.
\newblock Equivariant flows: exact likelihood generative learning for symmetric
  densities.
\newblock In {\em International conference on machine learning}, pages
  5361--5370. PMLR, 2020.

\bibitem{satorras2021n}
Victor Garcia~Satorras, Emiel Hoogeboom, Fabian Fuchs, Ingmar Posner, and Max
  Welling.
\newblock E(n) equivariant normalizing flows.
\newblock In M.~Ranzato, A.~Beygelzimer, Y.~Dauphin, P.S. Liang, and J.~Wortman
  Vaughan, editors, {\em Advances in Neural Information Processing Systems},
  volume~34, pages 4181--4192. Curran Associates, Inc., 2021.

\bibitem{kohler2023rigid}
Jonas K{\"{o}}hler, Michele Invernizzi, Pim de~Haan, and Frank No{\'{e}}.
\newblock Rigid body flows for sampling molecular crystal structures.
\newblock In {\em International Conference on Machine Learning, {ICML} 2023},
  volume 202 of {\em Proceedings of Machine Learning Research}, pages
  17301--17326. {PMLR}, 2023.

\bibitem{kohler2021smooth}
Jonas K\"{o}hler, Andreas Kr\"{a}mer, and Frank Noé.
\newblock Smooth normalizing flows.
\newblock In M.~Ranzato, A.~Beygelzimer, Y.~Dauphin, P.S. Liang, and J.~Wortman
  Vaughan, editors, {\em Advances in Neural Information Processing Systems},
  volume~34, pages 2796--2809. Curran Associates, Inc., 2021.

\bibitem{lipman2022flow}
Yaron Lipman, Ricky T.~Q. Chen, Heli Ben-Hamu, Maximilian Nickel, and Matthew
  Le.
\newblock Flow matching for generative modeling.
\newblock In {\em The Eleventh International Conference on Learning
  Representations}, 2023.

\bibitem{tong2023conditional}
Alexander Tong, Nikolay Malkin, Guillaume Huguet, Yanlei Zhang, Jarrid
  Rector-Brooks, Kilian Fatras, Guy Wolf, and Yoshua Bengio.
\newblock Conditional flow matching: Simulation-free dynamic optimal transport.
\newblock {\em arXiv preprint arXiv:2302.00482}, 2023.

\bibitem{song2023equivariant}
Yuxuan Song, Jingjing Gong, Minkai Xu, Ziyao Cao, Yanyan Lan, Stefano Ermon,
  Hao Zhou, and Wei-Ying Ma.
\newblock Equivariant flow matching with hybrid probability transport for 3d
  molecule generation.
\newblock In {\em Thirty-seventh Conference on Neural Information Processing
  Systems}, 2023.

\bibitem{midgley2023se}
Laurence~Illing Midgley, Vincent Stimper, Javier Antoran, Emile Mathieu,
  Bernhard Sch{\"o}lkopf, and Jos{\'e}~Miguel Hern{\'a}ndez-Lobato.
\newblock {SE}(3) equivariant augmented coupling flows.
\newblock In {\em Thirty-seventh Conference on Neural Information Processing
  Systems}, 2023.

\bibitem{dibak2021temperature}
Manuel Dibak, Leon Klein, Andreas Kr\"amer, and Frank No\'e.
\newblock Temperature steerable flows and {Boltzmann} generators.
\newblock {\em Phys. Rev. Res.}, 4:L042005, Oct 2022.

\bibitem{wirnsberger2020targeted}
Peter Wirnsberger, Andrew~J Ballard, George Papamakarios, Stuart Abercrombie,
  S{\'e}bastien Racani{\`e}re, Alexander Pritzel, Danilo Jimenez~Rezende, and
  Charles Blundell.
\newblock Targeted free energy estimation via learned mappings.
\newblock {\em The Journal of Chemical Physics}, 153(14):144112, 2020.

\bibitem{midgley2022flow}
Laurence~Illing Midgley, Vincent Stimper, Gregor N.~C. Simm, Bernhard
  Sch{\"o}lkopf, and Jos{\'e}~Miguel Hern{\'a}ndez-Lobato.
\newblock Flow annealed importance sampling bootstrap.
\newblock In {\em The Eleventh International Conference on Learning
  Representations}, 2023.

\bibitem{ding2021deepbar}
Xinqiang Ding and Bin Zhang.
\newblock Deepbar: A fast and exact method for binding free energy computation.
\newblock {\em Journal of Physical Chemistry Letters}, 12:2509--2515, 3 2021.

\bibitem{wu2020snf}
Hao Wu, Jonas K\"{o}hler, and Frank Noe.
\newblock Stochastic normalizing flows.
\newblock In H.~Larochelle, M.~Ranzato, R.~Hadsell, M.~F. Balcan, and H.~Lin,
  editors, {\em Advances in Neural Information Processing Systems}, volume~33,
  pages 5933--5944. Curran Associates, Inc., 2020.

\bibitem{ding2021computing}
Xinqiang Ding and Bin Zhang.
\newblock Computing absolute free energy with deep generative models.
\newblock {\em Biophysical Journal}, 120(3):195a, 2021.

\bibitem{rizzi2023multimap}
Andrea Rizzi, Paolo Carloni, and Michele Parrinello.
\newblock Multimap targeted free energy estimation, 2023.

\bibitem{rizzi2021targeted}
Andrea Rizzi, Paolo Carloni, and Michele Parrinello.
\newblock Targeted free energy perturbation revisited: Accurate free energies
  from mapped reference potentials.
\newblock {\em Journal of Physical Chemistry Letters}, 12:9449--9454, 2021.

\bibitem{invernizzi2022skipping}
Michele Invernizzi, Andreas Kr\"amer, Cecilia Clementi, and Frank No{\'e}.
\newblock Skipping the replica exchange ladder with normalizing flows.
\newblock {\em The Journal of Physical Chemistry Letters}, 13:11643--11649,
  2022.

\bibitem{klein2023timewarp}
Leon Klein, Andrew Y.~K. Foong, Tor~Erlend Fjelde, Bruno~Kacper Mlodozeniec,
  Marc Brockschmidt, Sebastian Nowozin, Frank Noe, and Ryota Tomioka.
\newblock Timewarp: Transferable acceleration of molecular dynamics by learning
  time-coarsened dynamics.
\newblock In {\em Thirty-seventh Conference on Neural Information Processing
  Systems}, 2023.

\bibitem{pmlr-v162-hoogeboom22a}
Emiel Hoogeboom, V\'{\i}ctor~Garcia Satorras, Cl{\'e}ment Vignac, and Max
  Welling.
\newblock Equivariant diffusion for molecule generation in 3{D}.
\newblock In Kamalika Chaudhuri, Stefanie Jegelka, Le~Song, Csaba Szepesvari,
  Gang Niu, and Sivan Sabato, editors, {\em Proceedings of the 39th
  International Conference on Machine Learning}, volume 162 of {\em Proceedings
  of Machine Learning Research}, pages 8867--8887. PMLR, 17--23 Jul 2022.

\bibitem{xu2022geodiff}
Minkai Xu, Lantao Yu, Yang Song, Chence Shi, Stefano Ermon, and Jian Tang.
\newblock Geodiff: A geometric diffusion model for molecular conformation
  generation.
\newblock In {\em International Conference on Learning Representations}, 2022.

\bibitem{jing2022torsional}
Bowen Jing, Gabriele Corso, Jeffrey Chang, Regina Barzilay, and Tommi Jaakkola.
\newblock Torsional diffusion for molecular conformer generation.
\newblock In S.~Koyejo, S.~Mohamed, A.~Agarwal, D.~Belgrave, K.~Cho, and A.~Oh,
  editors, {\em Advances in Neural Information Processing Systems}, volume~35,
  pages 24240--24253. Curran Associates, Inc., 2022.

\bibitem{schreiner2023implicit}
Mathias Schreiner, Ole Winther, and Simon Olsson.
\newblock Implicit transfer operator learning: Multiple time-resolution models
  for molecular dynamics.
\newblock In {\em Thirty-seventh Conference on Neural Information Processing
  Systems}, 2023.

\bibitem{finlay2020train}
Chris Finlay, J{\"o}rn-Henrik Jacobsen, Levon Nurbekyan, and Adam Oberman.
\newblock How to train your neural ode: the world of jacobian and kinetic
  regularization.
\newblock In {\em International conference on machine learning}, pages
  3154--3164. PMLR, 2020.

\bibitem{tong2020trajectorynet}
Alexander Tong, Jessie Huang, Guy Wolf, David Van~Dijk, and Smita Krishnaswamy.
\newblock Trajectorynet: A dynamic optimal transport network for modeling
  cellular dynamics.
\newblock In {\em International conference on machine learning}, pages
  9526--9536. PMLR, 2020.

\bibitem{onken2021ot}
Derek Onken, Samy~Wu Fung, Xingjian Li, and Lars Ruthotto.
\newblock Ot-flow: Fast and accurate continuous normalizing flows via optimal
  transport.
\newblock In {\em Proceedings of the AAAI Conference on Artificial
  Intelligence}, volume~35, pages 9223--9232, 2021.

\bibitem{albergo2023building}
Michael~Samuel Albergo and Eric Vanden-Eijnden.
\newblock Building normalizing flows with stochastic interpolants.
\newblock In {\em The Eleventh International Conference on Learning
  Representations}, 2023.

\bibitem{liu2023flow}
Xingchao Liu, Chengyue Gong, and qiang liu.
\newblock Flow straight and fast: Learning to generate and transfer data with
  rectified flow.
\newblock In {\em The Eleventh International Conference on Learning
  Representations}, 2023.

\bibitem{pooladian2023multisample}
Aram-Alexandre Pooladian, Heli Ben-Hamu, Carles Domingo-Enrich, Brandon Amos,
  Yaron Lipman, and Ricky Chen.
\newblock Multisample flow matching: Straightening flows with minibatch
  couplings.
\newblock {\em arXiv preprint arXiv:2304.14772}, 2023.

\bibitem{alvarez19towards}
David Alvarez-Melis, Stefanie Jegelka, and Tommi~S. Jaakkola.
\newblock Towards optimal transport with global invariances.
\newblock In Kamalika Chaudhuri and Masashi Sugiyama, editors, {\em Proceedings
  of the Twenty-Second International Conference on Artificial Intelligence and
  Statistics}, volume~89 of {\em Proceedings of Machine Learning Research},
  pages 1870--1879. PMLR, 16--18 Apr 2019.

\bibitem{kohler2023flow}
Jonas K{\"o}hler, Yaoyi Chen, Andreas Kr{\"a}mer, Cecilia Clementi, and Frank
  No{\'e}.
\newblock Flow-matching: Efficient coarse-graining of molecular dynamics
  without forces.
\newblock {\em Journal of Chemical Theory and Computation}, 19(3):942--952,
  2023.

\bibitem{papamakarios2021normalizing}
George Papamakarios, Eric~T Nalisnick, Danilo~Jimenez Rezende, Shakir Mohamed,
  and Balaji Lakshminarayanan.
\newblock Normalizing flows for probabilistic modeling and inference.
\newblock {\em J. Mach. Learn. Res.}, 22(57):1--64, 2021.

\bibitem{chen2018neural}
Tian~Qi Chen, Yulia Rubanova, Jesse Bettencourt, and David~K Duvenaud.
\newblock Neural ordinary differential equations.
\newblock In {\em Advances in neural information processing systems}, pages
  6571--6583, 2018.

\bibitem{grathwohl2018ffjord}
Will Grathwohl, Ricky T.~Q. Chen, Jesse Bettencourt, and David Duvenaud.
\newblock Scalable reversible generative models with free-form continuous
  dynamics.
\newblock In {\em International Conference on Learning Representations}, 2019.

\bibitem{Rezende2019EquivariantHF}
Danilo~Jimenez Rezende, S{\'e}bastien Racani{\`e}re, Irina Higgins, and Peter
  Toth.
\newblock Equivariant {H}amiltonian flows.
\newblock {\em arXiv preprint arXiv:1909.13739}, 2019.

\bibitem{fatras2021minibatch}
Kilian Fatras, Younes Zine, Szymon Majewski, R{\'e}mi Flamary, R{\'e}mi
  Gribonval, and Nicolas Courty.
\newblock Minibatch optimal transport distances; analysis and applications.
\newblock {\em arXiv preprint arXiv:2101.01792}, 2021.

\bibitem{kuhn1955hungarian}
Harold~W Kuhn.
\newblock The hungarian method for the assignment problem.
\newblock {\em Naval research logistics quarterly}, 2(1-2):83--97, 1955.

\bibitem{kabsch1976solution}
Wolfgang Kabsch.
\newblock A solution for the best rotation to relate two sets of vectors.
\newblock {\em Acta Crystallographica Section A: Crystal Physics, Diffraction,
  Theoretical and General Crystallography}, 32(5):922--923, 1976.

\bibitem{satorras2021graph}
V{\i}ctor~Garcia Satorras, Emiel Hoogeboom, and Max Welling.
\newblock E (n) equivariant graph neural networks.
\newblock In {\em International conference on machine learning}, pages
  9323--9332. PMLR, 2021.

\bibitem{dormand1980family}
John~R Dormand and Peter~J Prince.
\newblock A family of embedded runge-kutta formulae.
\newblock {\em Journal of computational and applied mathematics}, 6(1):19--26,
  1980.

\bibitem{xtb}
Christoph Bannwarth, Sebastian Ehlert, and Stefan Grimme.
\newblock Gfn2-xtb—an accurate and broadly parametrized self-consistent
  tight-binding quantum chemical method with multipole electrostatics and
  density-dependent dispersion contributions.
\newblock {\em Journal of Chemical Theory and Computation}, 15(3):1652--1671,
  2019.
\newblock PMID: 30741547.

\bibitem{kurtzberg1962approximation}
Jerome~M Kurtzberg.
\newblock On approximation methods for the assignment problem.
\newblock {\em Journal of the ACM (JACM)}, 9(4):419--439, 1962.

\bibitem{jing2020learning}
Bowen Jing, Stephan Eismann, Patricia Suriana, Raphael John~Lamarre Townshend,
  and Ron Dror.
\newblock Learning from protein structure with geometric vector perceptrons.
\newblock In {\em International Conference on Learning Representations}, 2021.

\bibitem{schutt2021equivariant}
Kristof Sch{\"u}tt, Oliver Unke, and Michael Gastegger.
\newblock Equivariant message passing for the prediction of tensorial
  properties and molecular spectra.
\newblock In {\em International Conference on Machine Learning}, pages
  9377--9388. PMLR, 2021.

\bibitem{liao2023equiformer}
Yi-Lun Liao and Tess Smidt.
\newblock Equiformer: Equivariant graph attention transformer for 3d atomistic
  graphs.
\newblock In {\em The Eleventh International Conference on Learning
  Representations}, 2023.

\bibitem{gasteiger2021gemnet}
Johannes Gasteiger, Florian Becker, and Stephan G{\"u}nnemann.
\newblock Gemnet: Universal directional graph neural networks for molecules.
\newblock {\em Advances in Neural Information Processing Systems},
  34:6790--6802, 2021.

\bibitem{du2023new}
Weitao Du, Yuanqi Du, Limei Wang, Dieqiao Feng, Guifeng Wang, Shuiwang Ji,
  Carla Gomes, and Zhi-Ming Ma.
\newblock A new perspective on building efficient and expressive 3d equivariant
  graph neural networks.
\newblock {\em arXiv preprint arXiv:2304.04757}, 2023.

\bibitem{zwartsenberg2022conditional}
Berend Zwartsenberg, Adam {\'S}cibior, Matthew Niedoba, Vasileios Lioutas,
  Yunpeng Liu, Justice Sefas, Setareh Dabiri, Jonathan~Wilder Lavington, Trevor
  Campbell, and Frank Wood.
\newblock Conditional permutation invariant flows.
\newblock {\em arXiv preprint arXiv:2206.09021}, 2022.

\bibitem{bilos2021equivariant}
Marin Bilo{\v{s}} and Stephan G{\"u}nnemann.
\newblock Equivariant normalizing flows for point processes and sets, 2021.

\bibitem{li2020exchangeable}
Yang Li, Haidong Yi, Christopher Bender, Siyuan Shan, and Junier~B Oliva.
\newblock Exchangeable neural ode for set modeling.
\newblock {\em Advances in Neural Information Processing Systems},
  33:6936--6946, 2020.

\bibitem{katsman2021equivariant}
Isay Katsman, Aaron Lou, Derek Lim, Qingxuan Jiang, Ser~Nam Lim, and
  Christopher~M De~Sa.
\newblock Equivariant manifold flows.
\newblock {\em Advances in Neural Information Processing Systems},
  34:10600--10612, 2021.

\bibitem{monteiller2019alleviating}
Pierre Monteiller, Sebastian Claici, Edward Chien, Farzaneh Mirzazadeh,
  Justin~M Solomon, and Mikhail Yurochkin.
\newblock Alleviating label switching with optimal transport.
\newblock {\em Advances in Neural Information Processing Systems}, 32, 2019.

\bibitem{brenier1991polar}
Yann Brenier.
\newblock Polar factorization and monotone rearrangement of vector-valued
  functions.
\newblock {\em Communications on pure and applied mathematics}, 44(4):375--417,
  1991.

\bibitem{NEURIPS2019_9015}
Adam Paszke, Sam Gross, Francisco Massa, Adam Lerer, James Bradbury, Gregory
  Chanan, Trevor Killeen, Zeming Lin, Natalia Gimelshein, Luca Antiga, Alban
  Desmaison, Andreas Kopf, Edward Yang, Zachary DeVito, Martin Raison, Alykhan
  Tejani, Sasank Chilamkurthy, Benoit Steiner, Lu~Fang, Junjie Bai, and Soumith
  Chintala.
\newblock Pytorch: An imperative style, high-performance deep learning library.
\newblock In {\em Advances in Neural Information Processing Systems 32}, pages
  8024--8035. Curran Associates, Inc., 2019.

\bibitem{politorchdyn}
Michael Poli, Stefano Massaroli, Atsushi Yamashita, Hajime Asama, Jinkyoo Park,
  and Stefano Ermon.
\newblock Torchdyn: Implicit models and neural numerical methods in pytorch.
\newblock In {\em Neural Information Processing Systems, Workshop on Physical
  Reasoning and Inductive Biases for the Real World}, volume~2, 2021.

\bibitem{flamary2021pot}
R{\'e}mi Flamary, Nicolas Courty, Alexandre Gramfort, Mokhtar~Z. Alaya,
  Aur{\'e}lie Boisbunon, Stanislas Chambon, Laetitia Chapel, Adrien Corenflos,
  Kilian Fatras, Nemo Fournier, L{\'e}o Gautheron, Nathalie~T.H. Gayraud,
  Hicham Janati, Alain Rakotomamonjy, Ievgen Redko, Antoine Rolet, Antony
  Schutz, Vivien Seguy, Danica~J. Sutherland, Romain Tavenard, Alexander Tong,
  and Titouan Vayer.
\newblock Pot: Python optimal transport.
\newblock {\em Journal of Machine Learning Research}, 22(78):1--8, 2021.

\bibitem{eastman2017openmm}
Peter Eastman, Jason Swails, John~D Chodera, Robert~T McGibbon, Yutong Zhao,
  Kyle~A Beauchamp, Lee-Ping Wang, Andrew~C Simmonett, Matthew~P Harrigan,
  Chaya~D Stern, et~al.
\newblock Openmm 7: Rapid development of high performance algorithms for
  molecular dynamics.
\newblock {\em PLoS computational biology}, 13(7):e1005659, 2017.

\bibitem{ase-paper}
Ask~Hjorth Larsen, Jens~Jørgen Mortensen, Jakob Blomqvist, Ivano~E Castelli,
  Rune Christensen, Marcin Dułak, Jesper Friis, Michael~N Groves, Bjørk
  Hammer, Cory Hargus, Eric~D Hermes, Paul~C Jennings, Peter~Bjerre Jensen,
  James Kermode, John~R Kitchin, Esben~Leonhard Kolsbjerg, Joseph Kubal,
  Kristen Kaasbjerg, Steen Lysgaard, Jón~Bergmann Maronsson, Tristan Maxson,
  Thomas Olsen, Lars Pastewka, Andrew Peterson, Carsten Rostgaard, Jakob
  Schiøtz, Ole Schütt, Mikkel Strange, Kristian~S Thygesen, Tejs Vegge, Lasse
  Vilhelmsen, Michael Walter, Zhenhua Zeng, and Karsten~W Jacobsen.
\newblock The atomic simulation environment—a python library for working with
  atoms.
\newblock {\em Journal of Physics: Condensed Matter}, 29(27):273002, 2017.

\bibitem{wiegand1968kish}
H~Wiegand.
\newblock Kish, l.: Survey sampling. john wiley \& sons, inc., new york, london
  1965, ix+ 643 s., 31 abb., 56 tab., preis 83 s., 1968.

\end{thebibliography}


\newpage
\appendix
{\Large \textbf{Supplementary Material}}

\section{Additional experiments}\label{appendix:further-experiments}
\subsection{Equivariant flows can achieve better paths than they are trained for}\label{appendix:better-paths}
We make an interesting observation regarding the performance of equivariant flows trained with the OT flow matching objective. As shown in \cref{fig:better-paths}a-c, these flows generate integration paths that are significantly shorter compared to their training counterparts. While they do not learn the OT path and exhibit curved trajectories with notable directional changes (as seen in \cref{fig:intro}d and  \cref{fig:LJ55}a), an interesting finding is the absence of path crossings. Any permutation of the samples would increase the distance between the latent sample and the push-forward sample.
To quantify the length of these paths, we compute the Euclidean distance between the latent sample and the push-forward sample. The relatively short path lengths can be attributed, in part, to the architecture of the flow model. As the flow is equivariant with respect to permutations, it prevents the crossing of particle paths and forces the particles to change the direction instead, as an average vector field is learned.


\begin{figure}[t]
  \centering
  \includegraphics[width=\textwidth]{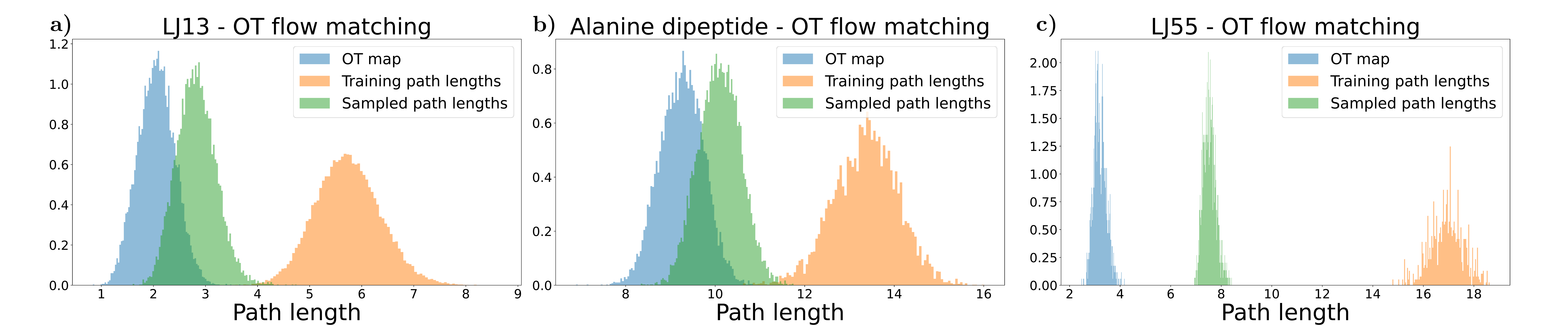}
  \caption{Path lengths distributions for models trained with OT flow matching. a) LJ13 system b) Alanine dipeptide system c) LJ55 system.}
  \label{fig:better-paths}
\end{figure}

\subsection{Memory requirement}\label{appendix:computational-cost}
As the system size increases, the memory requirements for likelihood training become prohibitively large. In the case of the LJ55 system, even a batch size of 5 necessitates more than 24 GB of memory, making effective training infeasible. In contrast, utilizing flow matching enables training the same model with a batch size of 256 while requiring less than 12 GB of memory.

For the alanine dipeptide system, likelihood training with a batch size of 8 demands more than 12 GB of memory, and a batch size of 16 exceeds 24 GB. On the other hand, employing flow matching allows training the same model with a batch size of 256, consuming less than 3 GB of memory.

The same holds for energy based training, as we need to evaluate \cref{eq:cnf-flow} and \cref{eq:cnf-likely} as well during training. 
Therefore, we opt to solely utilize flow matching for these two datasets. 


\subsection{Comparison with prior work on equivariant flows}\label{appendix:comparison-prior-work}
Both \cite{kohler2020equivariant} and 
\cite{satorras2021n} evaluate their models on the DW4 and LJ13 systems. However, \cite{kohler2020equivariant} do not provide the numerical values for the negative log likelihood (NLL), only presenting them visually. On the other hand, \cite{satorras2021n} use a small test set for both experiments. Additionally, their test set for LJ13 is biased as it originates from the burn-in phase of the sampling.

Although the test sets for both systems are insufficient, we still compare our OT flow matching and equivariant OT flow matching to these previous works in \cref{tab:comparison-prior-work}. It should be noted that the value range differs significantly from the values reported in \cref{tab:dataset-evaluations}, as the likelihood in this case includes the normalization constant of the prior distribution.

\begin{table}
  \caption{Comparison of different training methods for the DW4 and (biased) LJ13 dataset as used in \cite{satorras2021n} for ${10^5}$  and $10^4$ training points, respectively.}
  \label{tab:comparison-prior-work}
  \centering
  \begin{tabular}{lccc}
    \toprule
    \textbf{Dataset} &  Likelihood  \cite{satorras2021n} & OT flow matching& Equivariant OT flow matching\\
    \cmidrule{1-4}
    DW4 NLL $(\downarrow)$ &$7.48\pm0.05$&$7.21\pm0.03$&$7.18\pm0.02$\\
    \cmidrule{2-4}
    LJ13 NLL $(\downarrow)$ &$30.41\pm0.16$&$30.34\pm0.09$&$30.30\pm0.03$\\
  \end{tabular}
\end{table}

\subsection{Alanine dipeptide - free energy difference}\label{appendix:free-energy-difference}
Here we present additional results for the free energy computation of alanine dipeptide, as discussed in \cref{sec:alanine}. The effective sample size for the trained models is $0.50\pm0.13\%$, which is lower due to training on the biased dataset (compare to \cref{tab:dataset-evaluations}). We illustrate the Ramachandran plot in  \cref{fig:free-energy}a, the energy distribution in \cref{fig:free-energy}b, and the free energy projections for Umbrella sampling, model samples, and relaxed MD samples in  \cref{fig:free-energy}c.

\begin{figure}[t]
  \centering
  \includegraphics[width=\textwidth]{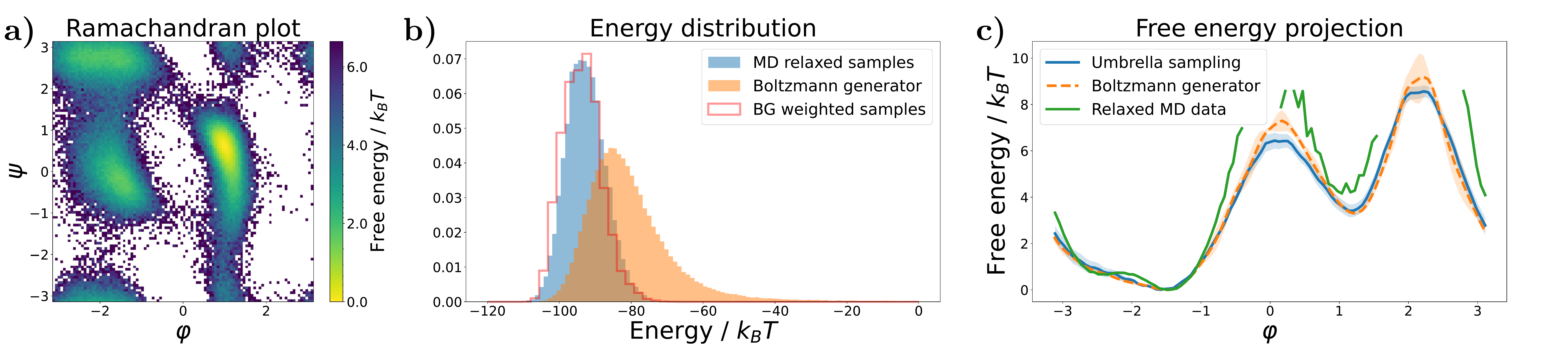}
  \caption{Alanine dipeptide - free energy experiments. The model is trained on the biased dataset. (a) Ramachandran plot depicting the generated joint marginal distribution over the backbone dihedral angles $\varphi$ and $\psi$ after filtering out samples with right-handed chirality and high energies.  (b) Energy histograms for samples generated by a flow trained with OT flow matching. (c)  Free energy distribution along the slowest transition ($\varphi$ dihedral angle) computed with umbrella sampling, the equivariant flow, and a relaxed MD trajectory. }
  \label{fig:free-energy}
\end{figure}

We obtain similar results with equivariant OT flow matching, as shown in \cref{tab:free-energy-differences-eq}.

\begin{table}[t]
  \caption{Dimensionless free energy differences for the slowest transition of alanine dipeptide estimated with a Boltzmann Generator trained with the equivariant OT flow matching objective. Umbrella sampling yields a converged reference solution. Errors over five runs.
  }
  \label{tab:free-energy-differences-eq}
  \centering
  \begin{tabular}{lcccc}
    \toprule
     & Umbrella sampling  &Equivariant OT flow matching \\
    Free energy difference / $k_BT$ &$4.10\pm0.26$&$3.95\pm 0.19$\\
    \bottomrule
  \end{tabular}
\end{table}

\subsection{Alanine dipeptide integration paths}
We compare the integration paths of the models trained on the alanine dipeptide dataset in \cref{fig:misc}a,b. Despite the higher number of atoms in the alanine dipeptide system (22 atoms), we observe a comparable difference in the integration paths as observed in the LJ13 system. This similarity arises from the presence of different particle types in the alanine dipeptide, resulting in a similar number of possible permutations compared to the LJ13 system.

\begin{figure}[t]
  \centering
  \includegraphics[width=\textwidth]{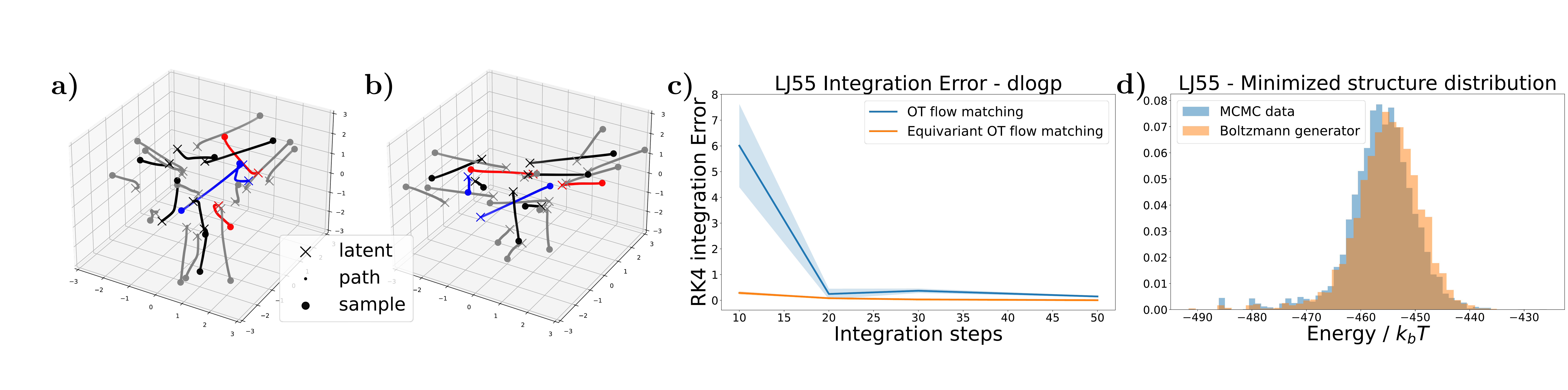}
  \caption{(a,b) Integration paths for alanine dipeptide for models trained with OT flow matching and equivariant OT flow matching, respectively. The color code for the atoms is as usual: gray - H, black - C, blue - N, red - O. (c) Integration error for a fixed step integrator for the LJ55 system. (d) Minimized samples from a model trained on the LJ55 system with equivariant OT flow matching.}
  \label{fig:misc}
\end{figure}

\subsection{Integration error}
The integration error, which arises from the use of a fixed-step integration method (rk4) instead of an adaptive solver (dopri5), is quantified by calculating the sum of absolute differences in sample positions and changes in log probability (dlogp). The discrepancies in position are depicted in \cref{fig:LJ55}d, while the difference in log probability change are presented in \cref{fig:misc}c. Both of these  are crucial in ensuring the generation of unbiased samples that accurately represent the target Boltzmann distribution of interest.

\subsection{Structure minimization}\label{appendix:structure-minimisation}
Following a similar approach as the deterministic structure minimization conducted for the LJ13 system in \cite{kohler2020equivariant}, we extend our investigation to the sampled minima of the LJ55 system. We minimize the energy of samples generated by a model trained with equivariant OT flow matching, as well as samples obtained from the test data. The resulting structures represent numerous local minima within the potential energy distribution.

In \cref{fig:misc}d, we observe a good agreement between the distributions obtained from both methods. Furthermore, it is noteworthy that both approaches yield samples corresponding to the global minimum. The energies of the samples at $300$K are considerably higher than those of the minimized structures.

\subsection{Log weight distribution}
Samples generated with a trained Boltzmann generators can be reweighted to the target distribution via \cref{eq:weights}.
Generally, the lower the variance of the log weight distribution, the higher the ESS. Although large outliers might reduce the ESS significantly. We present the log weight distribution for the LJ55 system in \cref{fig:logwandapprox}a, showing that the weight distribution for the BG trained with equivariant OT flow matching has lower variance. 

\begin{figure}[t]
  \centering
  \includegraphics[width=\textwidth]{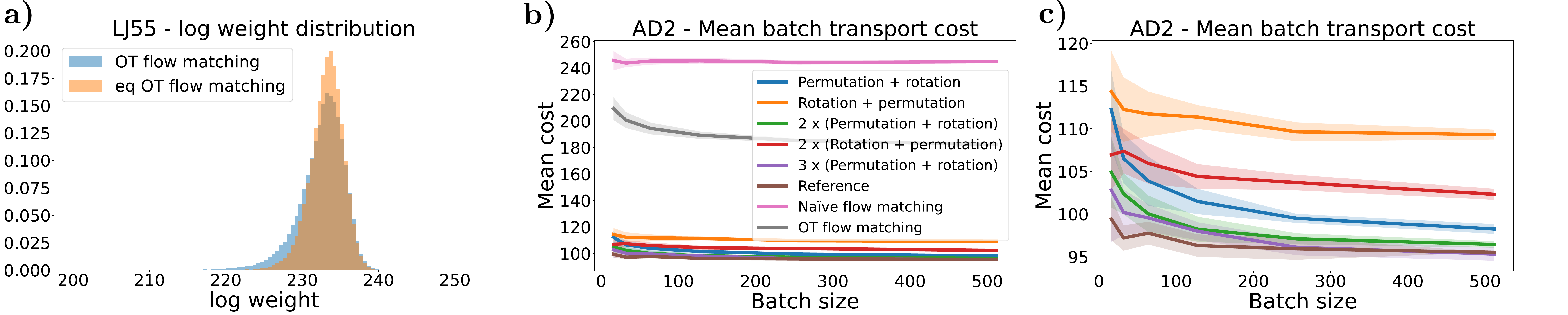}
  \caption{(a) Log weight distribution for samples generated for the LJ55 system. (b, c) Comparison of different approximation methods to approximate \cref{eq:approximated-cost-function} for alanine dipeptide.}
  \label{fig:logwandapprox}
\end{figure}

\subsection{Approximation methods for equivariant OT flow matching}\label{appendix:approximation}
The approximation given in \cref{eq:approximated-cost-function} is in practice performed using the Hungarian algorithm for permutations and the Kabsch algorithm for rotations. However, one could also think of other approximations, such as performing the rotation first or alternating multiple permutations and rotations. We compare different approximation strategies in  \cref{fig:logwandapprox}b,c. The baseline reference is computed with an expensive search over the approximation given in \cref{eq:approximated-cost-function}. Namely, we evaluate \cref{eq:approximated-cost-function} for 100 random rotations combined with the global reflection, denoted as $O_{200}$, for each sample, i.e.
$$\hat{c}(x_0, x_1)=\min_{o\in O_{200}(D)}\tilde{c}(x_0, \rho(o) x_1),$$ where $\tilde{c}(x_0,  x_1)$ is given by \cref{eq:approximated-cost-function}. 
Hence, this is 200 times more expensive than our approach. This baseline should be much closer to the true batch OT solution. 
Applying our approximation multiple times reduced the transportation cost slightly. Performing the rotations first, lead to inferior results. We observe the same behavior for the other systems.
Therefore, we used the approximation given in \cref{eq:approximated-cost-function} throughout the experiments, showing that this is sufficient to learn a close approximation of the OT map.

\subsection{Different data set sizes}\label{appendix:dataset-size}
Prior work \cite{kohler2020equivariant,satorras2021n} investigates different training set sizes to train equivariant flows, showing that for simple datasets already quite a few samples are sufficient to achieve good NLL. We test if this also holds for flow matching for the more complex alanine dipeptide system. To this end, we run our flow matching experiments also for smaller data set sizes of $10000$ and $1000$. We report our findings in \cref{tab:data-set-sizes}. In agreement with prior work, the NLL becomes worse the smaller the dataset. The same is true for the ESS. However, the sampled integration path lengths are nearly independent on the dataset size.

\begin{table}
\caption{Comparison of flows trained with different training set sizes. Errors over three runs.}
  \centering
  \label{tab:data-set-sizes}
\begin{tabular}{lccc}
\toprule
\textbf{Training type} &  \textbf{NLL $(\downarrow)$} & \textbf{ESS $(\uparrow)$} & \textbf{Path length $(\downarrow)$} \\
\cmidrule{2-4}
\cmidrule{2-4}
    & \multicolumn{3}{c}{Alanine dipeptide - $1000$ training samples} \\
\cmidrule{2-4}
OT flow matching   &$\mathbf{-104.97\pm0.05}$&$0.25\pm0.04 \%$&$10.14\pm0.05$\\
Eq OT flow matching& $-103.38\pm0.03$&$\mathbf{0.39\pm 0.08\%}$&$\mathbf{9.27\pm0.02}$\\
\cmidrule{2-4}
    & \multicolumn{3}{c}{Alanine dipeptide - $10000$ training samples} \\
\cmidrule{2-4}
OT flow matching   &$\mathbf{-105.16\pm0.17}$&$0.33\pm0.07 \%$&$10.13\pm0.02$\\
Eq OT flow matching& $-103.36\pm0.15$&$0.40\pm 0.06\%$&$\mathbf{9.32\pm0.01}$\\
\bottomrule
\end{tabular}
\end{table}

\subsection{Naïve flow matching}\label{appendix:naiive}
We provide naïve flow matching, i.e. flow matching without the OT reordering, as an additional baseline. Naïve flow matching results in even longer integration paths, while the ESS and likelihoods are close to the results of OT flow matching, as shown in \cref{tab:naiive-flowmatching}.
Flow matching with a non equivariant architecture, e.g. a fully connected neural network, failed for all systems but DW4 and is hence not reported.

\begin{table}
      \caption{Results for naïve flow matching.  Same hyperparameters as for OT flow matching. Errors over three runs.}
    \label{tab:naiive-flowmatching}
      \centering
      \begin{tabular}{lccc}
        \toprule
     \textbf{Dataset} & \textbf{NLL $(\downarrow)$} & \textbf{ESS $(\uparrow)$} & \textbf{Path length $(\downarrow)$} \\
        \cmidrule{2-4}
         DW4 &$1.68\pm0.01$ &$93.01 \pm 0.12\%$ & $3.41 \pm 0.02$\\
        \cmidrule{2-4}   
         LJ13&$-16.10\pm0.01$ &$57.55 \pm 2.20\%$ & $3.77 \pm 0.01$\\
        \cmidrule{2-4}
         LJ55&$-88.44\pm0.03$ &$3.15 \pm 1.04\%$ & $8.66 \pm 0.03$\\
        \cmidrule{2-4}
         Alanine dipeptide&$-107.56\pm0.09$ &$ 1.42\pm0.51 \%$ & $ 11.00\pm0.02 $\\
        \bottomrule
      \end{tabular}
\end{table}

\section{Proofs and derivations}\label{appendix:derivations}
\subsection{Equivariant OT flow matching}\label{appendix:eq-ot-flow-matching-derivation}


We show that our equivariant OT flow matching converges to the OT solution, loosely following the theory in \cite{monteiller2019alleviating} and extending it to infinite groups. Let $G$ be a compact (topological) group that acts on an Euclidean $n$-space $X$ by isometries, i.e., $d(g \cdot x, g\cdot y) = d(x,y),$ where $d$ is the Euclidean distance. This definition includes the symmetric group $S_n$, the orthogonal group $O(n),$ and all their subgroups. Consider the OT problem of minimizing the cost function $c(x,y)=d(x,y)^p,$ $p \geq 1$ between $G$-invariant measures $\nu_1$ and $\nu_2.$ 

Let us further define the pseudometric $\tilde{d}(x,y)^p = \min_{g \in G} d(x,g \cdot y)^p.$ We want to show that any OT map $T:X \to X$ for cost functions $c(x,y)=d(x,y)^p$ is also an OT map for $\tilde{c}(x,y)=\tilde{d}(x,y)^p.$

\begin{lemma}\label{lemma1}
Let $\nu_1$ and $\nu_2$ be $G$-invariant measures, then there exists an $G$-invariant OT plan $\pi(g\cdot x, g\cdot y) = \pi(x, y) $.
\end{lemma}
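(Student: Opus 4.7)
The plan is to construct the desired $G$-invariant plan by symmetrizing an arbitrary optimal plan against the (normalized) Haar measure of $G$. Since $G$ is compact, a normalized Haar probability measure $dg$ exists. Let $\Phi_g : X \times X \to X \times X$ be the diagonal action $\Phi_g(x,y) = (g\cdot x, g\cdot y)$. First I would invoke standard Kantorovich theory to pick any optimizer $\pi^\star \in \Pi(\nu_1,\nu_2)$ of $\int c \, d\pi$ (this exists because $c$ is continuous and lower bounded and the set of couplings is compact in the weak topology under mild conditions on $X$). Then I define
\begin{equation}
\bar\pi \;=\; \int_G (\Phi_g)_\# \pi^\star \, dg,
\end{equation}
interpreted in the weak sense via $\int f \, d\bar\pi = \int_G \!\int f \circ \Phi_g \, d\pi^\star \, dg$ for bounded continuous $f$.

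Next I would verify the three required properties of $\bar\pi$. (i) \emph{Marginals:} pushing $\bar\pi$ forward under the projection onto the first coordinate gives $\int_G g_\# \nu_1 \, dg$, and each $g_\# \nu_1 = \nu_1$ by $G$-invariance of $\nu_1$, so the marginal is $\nu_1$; the same argument works for $\nu_2$. Hence $\bar\pi \in \Pi(\nu_1,\nu_2)$. (ii) \emph{Cost preservation:} since $G$ acts by isometries, $c(g\cdot x, g\cdot y) = d(g\cdot x, g\cdot y)^p = d(x,y)^p = c(x,y)$, so
\begin{equation}
\int c \, d\bar\pi = \int_G \!\int c(g\cdot x, g\cdot y) \, d\pi^\star(x,y) \, dg = \int c \, d\pi^\star,
\end{equation}
using that $dg$ is a probability measure. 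Therefore $\bar\pi$ attains the same (optimal) cost as $\pi^\star$ and is itself optimal. (iii) \emph{$G$-invariance:} for any $h \in G$, $(\Phi_h)_\# \bar\pi = \int_G (\Phi_h)_\# (\Phi_g)_\# \pi^\star \, dg = \int_G (\Phi_{hg})_\# \pi^\star \, dg = \bar\pi$ by left-invariance of Haar measure, which is exactly the statement $\bar\pi(h\cdot x, h\cdot y) = \bar\pi(x,y)$.

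The main technical obstacle I expect is the measure-theoretic justification of the integral $\int_G (\Phi_g)_\# \pi^\star \, dg$: one must show that $g \mapsto \int f \circ \Phi_g \, d\pi^\star$ is measurable for a suitable class of test functions $f$, so that the integral against $dg$ is well defined and defines a Borel probability measure on $X\times X$. Compactness of $G$ together with continuity of the action on $X$ makes the integrand continuous in $g$ for $f$ bounded continuous, so this is standard but should be stated carefully. A minor secondary point is ensuring existence of the initial optimizer $\pi^\star$; the paper's setting (Euclidean $X$ with finite-$p$ moments, which is automatic if one restricts to measures with compact support arising from bounded-energy physical systems) makes this immediate from Kantorovich duality.
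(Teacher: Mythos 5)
Your proposal is correct and follows essentially the same route as the paper: symmetrize an arbitrary optimal plan by averaging over the Haar measure of the compact group, then verify that the average is still a coupling (using $G$-invariance of the marginals), has the same cost (using that $G$ acts by isometries and Haar measure is a probability measure), and is $G$-invariant (using left-invariance of Haar measure). Your explicit attention to the measurability of $g \mapsto \int f\circ\Phi_g\,d\pi^\star$ and to the existence of the initial optimizer is a welcome addition of rigor, but the underlying argument is identical.
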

\begin{proof} Let $\pi^\prime \in \Pi (\nu_1, \nu_2)$ be a (not necessarily invariant) OT plan.
We average the OT plan over the group
\begin{equation}
    \bar{\pi}(x,y) = \int_G  \pi^{\prime}(g\cdot x, g\cdot y) d\mu (g),
\end{equation}
where $\mu$ is the Haar measure on $G$.

An important property of the Haar integral is that 
\begin{equation}
    \label{eq:int_Gf}
    \int_G f(hg) d\mu(g) = \int_G f(g) d\mu(g)
\end{equation}
for any measurable function $f$ and $h \in G$. Therefore the average plan is $G$-invariant:
$$ \bar{\pi}(h\cdot x,h\cdot y)  = \int_G  \pi^{\prime}(hg\cdot x, hg\cdot y) d\mu (g) = \int_G  \pi^{\prime}(g\cdot x, g\cdot y) d\mu (g) = \bar{\pi}(x,y).$$

Next we compute the marginals
\begin{align*}
    \int_X \bar{\pi}(x, dy) &=  \int_X \int_G \pi^{\prime}(g\cdot x, g\cdot dy) d\mu(g) = \int_G \int_X \pi^{\prime}(g\cdot x, g\cdot dy) d\mu(g) 
    \\ 
    &= \int_G \nu_1(g\cdot x) d\mu(g)  = \int_G \nu_1(x) d\mu(g) = \nu_1(x)
\end{align*}
and analogously $\int_X \bar{\pi}(dx, y) = \nu_2(y).$
Hence, the average plan is also a coupling, $\bar{\pi} \in \Pi(\nu_1, \nu_2)$.

The cost of the average plan is
\begin{align*}
    \int_{X^2} & c(x,y) \bar{\pi}(dx,dy) 
    \\
    &= \int_{X^2} c(x,y) \int_G \pi^{\prime}(g\cdot dx, g\cdot dy) d\mu(g)
        &  (\text{Definition\ of\ }\bar{\pi}) \\
    &=  \int_G  \int_{X^2} c(x,y) \pi^{\prime}(g\cdot dx, g\cdot dy) d\mu(g)
        &  (\text{Fubini's\ theorem})\\
    &=  \int_G  \int_{X^2} c(g^{-1} \cdot x, g^{-1} \cdot y) \pi^{\prime}(dx,dy) d\mu(g)
        & (\text{Volume-preserving\ substitution})\\
    & = \int_G  \int_{X^2} c(x,y) \pi^{\prime}(dx,dy) d\mu(g)
        &  \text{(Isometric\ group\ action)}\\
    & = \int_{X^2} c(x,y) \pi^{\prime} (dx,dy),
\end{align*}
and hence the average plan is optimal.

\end{proof}

\begin{lemma}\label{lemma2}
Let $T\colon x\mapsto y$ be an $G$-equivariant OT map, $x \in X$, and $g^\ast := \underset{g \in G}{\argmin}\ c(x, g\cdot T(x)).$ Then $g^\ast = \mathrm{id}.$
\end{lemma}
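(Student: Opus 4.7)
My plan is to derive the lemma from $c$-cyclical monotonicity of the graph of the equivariant OT map $T$, combined with the $G$-invariance of $\nu_1$, the isometric action of $G$, and the compactness of $G$. The intuition is that once we fix a candidate minimizer $h := g^\ast(x)$, the cyclic orbit $(x, h x, h^2 x, \dots)$ together with its equivariant images $(T(x), h T(x), h^2 T(x), \dots)$ forms a natural cycle on which cyclical monotonicity applies; comparing the straight pairing $x_i \mapsto T(x_i)$ against its cyclic shift should force equality $c(x, h T(x)) = c(x, T(x))$, so that $\mathrm{id}$ also attains the minimum.

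Concretely, I would fix $x$ in the support of $\nu_1$ and set $x_i := h^{i-1} x$ for $i = 1,\dots,n$. Then each $x_i$ lies in $\mathrm{supp}(\nu_1)$ by $G$-invariance, and $T(x_i) = h^{i-1} T(x)$ by equivariance, so $c$-cyclical monotonicity is applicable along this cycle. Using the cyclic shift $\sigma(i) = (i \bmod n)+1$ and simplifying each summand by the isometry (namely $c(h^{i-1} x, h^{i} T(x)) = c(x, h T(x))$ for $i < n$, and $c(h^{n-1} x, T(x)) = c(x, h^{-(n-1)} T(x))$) would yield
\begin{equation*}
    n\, c(x, T(x)) \;\le\; (n-1)\, c(x, h T(x)) + c\!\left(x, h^{-(n-1)} T(x)\right).
\end{equation*}
Writing $a := c(x, T(x))$, $b := c(x, h T(x))$, and using $b \le a$ (because $\mathrm{id}$ is a candidate in the argmin defining $h$), this rearranges to $c(x, h^{-(n-1)} T(x)) \ge n(a-b) + b$.

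The step I expect to be delicate is handling elements $h$ of infinite order, where the orbit never closes and the term $h^{-(n-1)}$ cannot be simplified by a power relation. I would resolve this via compactness: the map $g \mapsto c(x, g T(x))$ is continuous on the compact group $G$ and therefore bounded by some finite $M$, so the inequality yields $M \ge n(a-b) + b$ for every $n \ge 2$; sending $n \to \infty$ forces $a = b$, and hence $\mathrm{id}$ attains the minimum. The lemma's conclusion $g^\ast = \mathrm{id}$ is thus meant in the sense that $\mathrm{id}$ lies in the argmin set (which need not be a singleton in general); uniqueness, if required, would need extra assumptions such as absolute continuity of $\nu_1$ together with a Brenier-type argument for $p = 2$.
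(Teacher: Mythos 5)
Your proof is correct, but it takes a genuinely different route from the paper. The paper argues by contradiction via a competitor map: if $g^\ast \neq \mathrm{id}$ on the orbit of some $x^\ast$, it post-composes $T$ with $g^\ast$ on that orbit to build an equivariant map $T'$ of strictly smaller cost, contradicting optimality of $T$. You instead invoke $c$-cyclical monotonicity of (the carrier of) the optimal plan along the cycle $x, hx, \dots, h^{n-1}x$, reduce every summand to $a = c(x,T(x))$ or $b = c(x,hT(x))$ by isometry, and use compactness of $G$ to bound the one leftover term $c\bigl(x, h^{-(n-1)}T(x)\bigr)$ uniformly in $n$, forcing $a=b$. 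Your argument is arguably tighter: it handles elements $h$ of infinite order cleanly, it does not require constructing and validating a competitor transport map (the paper's $T'$ needs, but is not shown to have, the properties that it still pushes $\nu_1$ to $\nu_2$ — true by $G$-invariance of $\nu_2$ — and that the modified set has positive measure, which fails for a single orbit of a continuous group), and it compares $T$ against all plans rather than only equivariant ones. The price is that cyclical monotonicity only holds on a full-measure ($G$-invariant) subset of the graph of $T$, so your conclusion is "for $\pi$-a.e.\ $(x,T(x))$, $\mathrm{id}$ lies in the argmin"; this matches how the lemma is actually used downstream (Equation \eqref{eq:minimal-dist} and Lemma \ref{lemma3}), and your closing caveat that the argmin need not be a singleton is apt, since the paper's blanket "$g^\ast=\mathrm{id}$" is overstated for the same reason.
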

\begin{proof}
$T$  maps the orbit of $x$, denoted as $G\cdot x=\{g\cdot x|g\in G\}$, to the corresponding orbit of $T(x)$, due to equivariance.

If there exists an $x^\ast \in X$ such that $g^\ast \neq \mathrm{id}$ then this property holds true for the entire orbit of $x^\ast$.

Now, let us define an $G$-equivariant map $T^{\prime}\colon x\mapsto y$  such that $T^{\prime}(G\cdot x^\ast)=g^\ast T(G\cdot x^\ast)$ and $T^{\prime}(x)=T(x)$ everywhere else. However, in this case, $T$ would not be the OT map, as $c(x^\ast, T'(x^\ast))<c(x^\ast, T(x^\ast))$. Therefore, it follows that $g^\ast = \mathrm{id}$ holds for the $G$-equivariant OT map.
\end{proof}

\paragraph{Discrete optimal transport} Consider a finite number of iid samples from the two $G$-invariant distributions $\nu_1, \nu_2$, denoted as $\{x_i\}_{i=1}^n$, $\{y_i\}_{i=1}^n$, respectively. Let further $\pi(\nu_1, \nu_2)$ be an $G$-invariant OT plan. 
The discrete optimal transport problem seeks the solution to
\begin{equation}\label{eq:gamma}
    \gamma^\ast = \argmin_{\gamma \in \mathcal{P}} \sum_{ij} \gamma_{ij} M_{ij},
\end{equation}
where $M_{ij}=c(x_i, y_j)$ represents the cost matrix and $\mathcal{P}$ is the set of probability matrices defined as $\mathcal{P}=\{\gamma \in (\mathbb{R}^+)^{n\times n}|\gamma \mathbf{1}_n= \mathbf{1}_n/n, \gamma^T \mathbf{1}_n= \mathbf{1}_n/n  \}$. The total transportation cost $C$ is given by
\begin{equation}
    C = \min_{\gamma \in \mathcal{P}} \sum_{ij} \gamma_{ij} M_{ij}.
\end{equation}
Approximately generating sample pairs from the OT plan $\pi(\nu_1, \nu_2)$, as required for flow matching (\cref{sec:flow-matching}), can be achieved by utilizing the probabilities assigned to sample pairs by $\gamma^\ast$. 
Let further 
\begin{equation}\label{eq:gamma-star}
    \tilde{\gamma}^\ast = \argmin_{\gamma \in \mathcal{P}} \sum_{ij} \gamma_{ij} \tilde{M}_{ij}
\end{equation}
denote the optimal transport matrix for the cost function $\tilde{c}(x,y)$ and $\tilde{C}$ the corresponding transportation cost. Sample pairs $(x_i, y_j^\prime)$ generated according to  $\tilde{\gamma}^\ast$ are orientated along their orbits to have minimal cost, i.e.
\begin{align}\label{eq:minimal-dist}
   (x_i, y_j^\prime)=(x_i, g^\ast y_j), \quad  g^\ast := \underset{g \in G}{\argmin}\ c(x_i, g \cdot y_j).
\end{align}

Note that the group action $g$ does not change the probability under the marginals, as $\nu_1$ and $\nu_2$ are both $G$-invariant.

\begin{lemma}\label{lemma3}
Let ${\gamma}^\ast$ and $\tilde{\gamma}^\ast$ be defined as in \cref{eq:gamma} and \cref{eq:gamma-star}, respectively.
Then, sample pairs generated based on $\tilde{\gamma}^\ast$ have a lower average cost  
than sample pairs generated according to ${\gamma}^\ast$.
\end{lemma}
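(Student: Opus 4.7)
The plan is a short two-step argument based on the pointwise inequality $\tilde{c}(x,y) \leq c(x,y)$, which follows immediately from the definition $\tilde{c}(x,y) = \min_{g \in G} c(x, g \cdot y)$ together with $\mathrm{id} \in G$. At the level of the discrete cost matrices this reads $\tilde{M}_{ij} \leq M_{ij}$ entrywise, and this entrywise domination will be the only ingredient really needed.

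First I would identify the two average costs being compared. The average cost of the sample pairs produced by $\gamma^\ast$ is, by construction, $C = \sum_{ij} \gamma^\ast_{ij} M_{ij}$. For the pairs produced by $\tilde{\gamma}^\ast$, each selected index pair $(i,j)$ contributes the aligned pair $(x_i, g^\ast \cdot y_j)$ from \cref{eq:minimal-dist}, whose Euclidean cost equals $c(x_i, g^\ast \cdot y_j) = \tilde{c}(x_i, y_j) = \tilde{M}_{ij}$ by the choice of $g^\ast$. Hence the average cost of sample pairs generated via $\tilde{\gamma}^\ast$ is exactly $\tilde{C} = \sum_{ij} \tilde{\gamma}^\ast_{ij} \tilde{M}_{ij}$.

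The second step is then a short chain of inequalities,
\[
\tilde{C} = \min_{\gamma \in \mathcal{P}} \sum_{ij} \gamma_{ij} \tilde{M}_{ij} \leq \sum_{ij} \gamma^\ast_{ij} \tilde{M}_{ij} \leq \sum_{ij} \gamma^\ast_{ij} M_{ij} = C,
\]
where the first inequality uses the optimality of $\tilde{\gamma}^\ast$ for the cost matrix $\tilde{M}$ (noting that $\gamma^\ast \in \mathcal{P}$), and the second uses the entrywise bound $\tilde{M}_{ij} \leq M_{ij}$ together with $\gamma^\ast_{ij} \geq 0$. Concatenating these gives $\tilde{C} \leq C$, which is the claim.

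There is essentially no technical obstacle here; the one subtlety worth spelling out explicitly is that the alignment step in \cref{eq:minimal-dist} changes neither the marginals nor the admissibility of the coupling, since $\nu_1$ and $\nu_2$ are $G$-invariant, so that $\tilde{\gamma}^\ast$ really does describe a valid distribution over aligned sample pairs whose average cost coincides with $\tilde{C}$. If desired, this could be chained with \cref{lemma1} and \cref{lemma2} to argue that $\tilde{C}$ in fact approaches the true $G$-invariant OT cost as the batch size grows, but the present lemma requires only the one-line comparison above.
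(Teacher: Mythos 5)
Your proposal is correct and takes essentially the same route as the paper, which likewise rests on the entrywise bound $\tilde{M}_{ij}\le M_{ij}$ (from $\mathrm{id}\in G$) implying $\tilde{C}\le C$; your version merely spells out the intermediate comparison $\tilde{C}\le\sum_{ij}\gamma^\ast_{ij}\tilde{M}_{ij}\le\sum_{ij}\gamma^\ast_{ij}M_{ij}=C$, which the paper leaves implicit. The remark that the alignment in \cref{eq:minimal-dist} preserves the marginals by $G$-invariance is a useful clarification but not a departure from the paper's argument.
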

\begin{proof}
The total transportation cost $\tilde{C}$ is always less than or equal to $C$ since the cost function $\tilde{c}(x_i,y_j)\le c(x_i,y_j)$ for all $i$ and $j$, by definition. Note that as the number of samples approaches infinity, the inequality between $\tilde{C}$ and $C$ becomes an equality.
\end{proof}

Note that in practice, samples generated wrt $\tilde{\gamma}^\ast$ are a much better approximation, as shown in \cref{sec:experiments}.
Moreover, by construction, sample pairs generated according to $\tilde{\gamma}^\ast$ satisfy ${\argmin}_{g \in G} c(x_i, g \cdot y_j) = \mathrm{id}$, as indicated in \cref{eq:minimal-dist}. This property, which is also true for samples from the $G$-invariant OT plan according to \cref{lemma2}, ensures that the generated pairs are correctly matched within their respective orbits. However, sample pairs generated according to ${\gamma}^\ast$ do not generally possess this property.







We now combine our findings in the following Theorem.
\setcounter{theorem}{0}
\begin{theorem}\label{theorem}
Let $T\colon x\mapsto y$ be an OT map between $G$-invariant measures $\nu_1$ and $\nu_2$, using the cost function $c$. Then
\begin{enumerate}
    \item $T$ is $G$-equivariant and the corresponding OT plan $\pi(\nu_1, \nu_2)$ is $G$-invariant.
    \item For all pairs $(x, T(x))$ and $y \in G\cdot T(x):$ 
    \begin{align}
        c(x, T(x)) = \int_G c(g\cdot x, g\cdot T(x)) d\mu(g) = \min_{g \in G} c(x,g \cdot y)
    \end{align}
    \item  T is also an OT map for the cost function $\tilde{c}$.   
\end{enumerate}
\end{theorem}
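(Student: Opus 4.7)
The plan is to deduce the three assertions in order, each from one of the preceding lemmas together with a standard piece of OT machinery. For part~1, I would apply \cref{lemma1} to obtain a $G$-invariant OT plan $\bar\pi$, and combine it with uniqueness of the OT plan when an OT map $T$ exists (Brenier-type theorems for strictly convex costs). Since the graph plan $\pi_T = (\mathrm{id}, T)_\ast \nu_1$ is itself an OT plan, uniqueness forces $\pi_T = \bar\pi$, so $\pi_T$ is $G$-invariant. Translating invariance of the graph measure back to $T$ yields $T(g\cdot x) = g\cdot T(x)$ for $\nu_1$-a.e.\ $x$, which is exactly $G$-equivariance.

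For part~2, the first equality is immediate from isometry of the action: $c(g\cdot x, g\cdot T(x)) = c(x, T(x))$ for every $g$, and $\mu$ is a probability measure on $G$, so the integral evaluates trivially to $c(x, T(x))$. The second equality uses \cref{lemma2}, which states that the minimum of $c(x, g\cdot T(x))$ over $g$ is attained at the identity and hence equals $c(x, T(x))$. Any $y \in G\cdot T(x)$ can be written as $y = g_0\cdot T(x)$, and the substitution $g \mapsto g g_0^{-1}$ gives $\min_g c(x, g\cdot y) = \min_g c(x, g\cdot T(x)) = c(x, T(x))$, closing the chain.

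For part~3, I would split the argument into two pieces. First, part~2 shows that $\tilde c(x, T(x)) = c(x, T(x))$ pointwise, so the $\tilde c$-cost of $\pi_T$ coincides with its $c$-cost, which equals the $c$-OT value $C$. The nontrivial step, and the main obstacle I anticipate, is establishing that the $\tilde c$-OT value also equals $C$; since $\tilde c \leq c$ one has $\tilde c$-OT $\leq c$-OT automatically, but the reverse inequality requires exploiting invariance of the marginals. I would invoke Kantorovich duality: averaging any optimal pair of dual potentials against the Haar measure produces a $G$-invariant optimal pair for the $c$-problem, using exactly the symmetrization trick from \cref{lemma1}. For a $G$-invariant potential $\psi$, the feasibility constraint $\phi(x) + \psi(y) \leq c(x, y)$ is equivalent to $\phi(x) + \psi(y) \leq c(x, g\cdot y)$ for all $g$ (since $\psi(g\cdot y) = \psi(y)$), and taking the infimum over $g$ on the right yields $\phi(x) + \psi(y) \leq \tilde c(x, y)$. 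So, restricted to $G$-invariant potentials, the $c$-dual and $\tilde c$-dual problems have the same feasible set and objective; their values coincide, and therefore so do the two primal OT values. Combining this with the first piece gives that $\pi_T$ is $\tilde c$-optimal, which is part~3.

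The main technical subtlety, beyond the duality argument itself, is a measurable-selection and marginal-preservation issue one would encounter with a more elementary approach. A tempting alternative to duality would be to take a $\tilde c$-optimal plan $\pi^\flat$ and ``rotate'' each pair $(x,y)$ to $(x, g^\ast(x,y)\cdot y)$ to produce a $c$-plan of the same cost; but then one must choose $g^\ast$ measurably and, more seriously, verify that the second marginal is still $\nu_2$, which is not automatic even when $\pi^\flat$ is symmetrized to be $G$-invariant. The duality route sidesteps both issues at the cost of a mild lower-semicontinuity/moment hypothesis on $c$ and the marginals, which is standard in this setting and already implicit in the discrete counterpart proved as \cref{lemma3}.
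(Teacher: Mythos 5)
Your proposal is correct, and for parts 1 and 2 it is essentially the paper's argument spelled out in more detail: part 1 is \cref{lemma1} plus uniqueness of the optimal plan for the strictly convex cost (identifying the graph plan $(\mathrm{id},T)_\ast\nu_1$ with the symmetrized plan), and part 2 is the isometry of the action plus \cref{lemma2}. Where you genuinely diverge is part 3. The paper's proof consists of the observation that the $c$-optimal plan $\pi$ satisfies $\int \tilde c\,d\pi = \int c\,d\pi = C$, combined with the trivial inequality $\tilde C \le C$ from \cref{lemma3}; it never establishes the reverse inequality $\tilde C \ge C$, without which one cannot conclude that $\pi$ is $\tilde c$-optimal rather than merely that it has $\tilde c$-cost $C$. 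You correctly identify this as the nontrivial step and close it via Kantorovich duality: averaging an optimal dual pair over the Haar measure yields a $G$-invariant optimal pair for the $c$-problem, and for a $G$-invariant second potential the constraint $\phi(x)+\psi(y)\le c(x,y)$ self-improves to $\phi(x)+\psi(y)\le \min_{g}c(x,g\cdot y)=\tilde c(x,y)$, so weak duality for $\tilde c$ gives $\tilde C \ge C$. You are also right that the "rotate each pair to its orbit minimizer" shortcut founders on measurable selection and, more importantly, on preservation of the second marginal, which is presumably why the paper does not attempt it. In short, your route buys a complete proof of part 3 at the cost of invoking strong duality (and its standard lower-semicontinuity and moment hypotheses, which $c=d^p$ satisfies here), whereas the paper's version of part 3 as written leaves the inequality $\tilde C \ge C$ implicit.
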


\begin{proof}
We prove the parts of \cref{theorem} in order. 
\begin{enumerate}
    \item Follows from \cref{lemma1} and the uniqueness of the OT plan for  convex cost functions \cite{brenier1991polar}.
    \item Follows from \cref{lemma2}.
    \item Follows directly from 2., \cref{lemma2}, and
    \begin{align}
         \int_{X^2} \min_{g \in G} c(x, g\cdot y) \pi(dx,dy) = \int_{X^2} c(x,y) \pi(dx,dy).
    \end{align}
     However, this OT map is not unique.
\end{enumerate}
\end{proof}


Hence, we can generate better sample pairs, approximating the OT plan $\pi(\nu_1, \nu_2)$, by using $\tilde{c}$ as a cost function instead of $c$. Training pairs $(x_i, y_j')$ are orientated along their orbits to have minimal cost (\cref{eq:minimal-dist}).

These findings provide the motivation for our proposed equivariant flow matching objective, as described in \cref{sec:eq-ot-flow-matching}, as well as the utilization of $G$-equivariant normalizing flows, as discussed in \cref{sec:architecture}.

\subsection{Mean free update for identical particles}\label{appendix:mean-free}
The update, described in \cref{sec:architecture}, conserves the geometric center if all particles are of the same type.

\begin{proof}
The update of layer $l$ does not alter the geometric center / center mass if all particles are of the same type as $m_{ij}^l=m_{ji}^l$ and then
\begin{align*}
    \bar{x}^{l+1}&=\sum_i x_i^{l+1}=\sum_i x_i^l+\sum_{i, j\neq i} \frac{\left(x_i^l-x_j^l \right)}{d_{ij}+1}\phi_d(m_{ij}^l)\\
    &=\bar{x}^{l} +\sum_{i, j> i} \frac{\left(x_i^l-x_j^l \right)+\left(x_j^l-x_i^l \right)}{d_{ij}+1}\phi_d(m_{ij}^l)\\
    &=\bar{x}^{l}
\end{align*}
\end{proof}

\subsection{Selective reweighting for efficient computation of observables}\label{appendix:reweighting}
As normalizing flows are exact likelihood models, we can reweight the push-forward distribution to the target distribution of interest. This allows the unbiased evaluation of expectation values of observables $O$ through importance sampling
\begin{equation}\label{eq:weights}
    \langle O \rangle_{\mu} =\frac{\mathbb{E}_{x_1 \sim \tilde{p}_1(x_1)} [w(x_1) O(x_1)]}{\mathbb{E}_{x_1 \sim \tilde{p}_1(x_1)} [w(x_1)]}, \quad w(x_1)=\frac{\mu(x_1)}{\tilde{p}_1(x_1)}.
\end{equation}

\section{Technical details}\label{appendix:technical-details}
\subsection{Code libraries}
Flow models and training are implemented in \textit{Pytorch} \cite{NEURIPS2019_9015} using the following code libraries:
\textit{bgflow} \cite{noe2019boltzmann, kohler2020equivariant}, \textit{torchdyn} \cite{politorchdyn},  \textit{Pot: Python optimal transport} \citep{flamary2021pot}, and the code corresponding to \cite{satorras2021n}. The MD simulations are run using \textit{OpenMM}\cite{eastman2017openmm}, \textit{ASE} \cite{ase-paper}, and \textit{xtb-python} \cite{xtb}.

The code will be integrated in the bgflow library \url{https://github.com/noegroup/bgflow}.

\subsection{Benchmark systems}\label{appendix:benchmark-systems}
For the DW4 and LJ13 system, we choose the same parameters as in \cite{kohler2020equivariant, satorras2021n}. 

\paragraph{DW4} The energy $U(x)$ for the DW4 system is given by
\begin{equation}
  U^{\texttt{DW}}(x) = \frac{1}{2 \tau}\sum_{i,j}
a\, (d_{ij} - d_0) + b\, (d_{ij} - d_0)^2  + c\, (d_{ij} - d_0)^4 , 
\end{equation}
where $d_{ij}$ is the Euclidean distance between particle $i$ and $j$, the parameters are chosen as $a=0, b=-4, c=0.9, d_0=4$, and $\tau=1$, which is a dimensionless temperature factor. 

\paragraph{LJ13} 
The energy $U(x)$ for the LJ13 system is given by
\begin{equation}
    U^{\texttt{LJ}}(x) = \frac{\epsilon}{2 \tau} \left[ \sum_{i,j} \left( \left(\frac{r_{m}}{d_{ij}}\right)^{12} - 2 \left(\frac{r_{m}}{d_{ij}}\right)^{6} \right)\right],
\end{equation}
with parameters  $r_m=1$, $\epsilon=1$, and $\tau=1$.

\paragraph{LJ55}
We choose the same parameters as for the LJ13 system.

Both the LJ13 and the LJ55 system were generated with MCMC with $1000$ parallel chains, where each chain is run for $10000$ steps after a long burn-in phase of $200000$ steps starting from a random generated initial state.

\paragraph{Alanine dipeptide}
The alanine dipeptide training data at temperature $T=300 \textrm{K}$ set is generated through two steps:
(i) Firstly, we perform an MD simulation, using the classical \textit{Amber ff99SBildn} force-field, at $300\textrm{K}$ for implicit solvent for a duration of $1$ ms \cite{kohler2021smooth} using the \texttt{openMM} library.

(ii) Secondly, we relax $10^5$ randomly selected states from the MD simulation for $100$ fs each, using the semi-empirical \textit{GFN2-xTB} force-field \cite{xtb} and the ASE library \cite{ase-paper} with a friction constant of $0.5$ a.u. 

Both simulations use a time step of $1$ fs.
We create a test set in the same way.

\paragraph{Alanine dipeptide umbrella sampling}
To accurately estimate the free energy difference, we performed five umbrella sampling simulations along the $\varphi$ dihedral angle using the semi-empirical \textit{GFN2-xTB} force-field. The simulations were carried out at 25 equispaced $\varphi$ angles. For each angle, the simulation was initiated by relaxing the initial state for $2000$ steps with a high friction value of $0.5$ a.u. Subsequently, molecular dynamics (MD) simulations were performed for a total of $10^6$ steps using the \textit{GFN2-xTB} force field, with a friction constant of 0.02 a.u. The system states were recorded every 1000 steps during the second half of the simulation. A timestep of $1$ fs was utilized for the simulations.

The datasets are available at \url{https://osf.io/srqg7/?view_only=28deeba0845546fb96d1b2f355db0da5}.

\subsection{Hyperparameters}
Depending on the dataset, different model sizes were used, as reported in \cref{tab:architecture-papameters}. All neural networks $\phi_{\alpha}$ have one hidden layer with $n_{\mathrm{hidden}}$ neurons and \textit{SiLU} activation functions. The embedding $a_i$ is given by a single linear layer with $n_{\mathrm{hidden}}$ neurons.

We report the used training schedules in  \cref{tab:training-parameters}. Note that $5e\text{-}4$/$5e\text{-}5$ in the second row means that the training was started with a learning rate of $5e\text{-}4$ for $200$ epochs and then continued with a learning rate of $5e\text{-}5$ for another $200$ epochs. All batches were reordered prior to training the model. The only exception is alanine dipeptide trained with equivariant OT flow matching. 

\begin{table}
  \caption{Model hyperparameters}
  \label{tab:architecture-papameters}
  \centering
  \begin{tabular}{lccc}
    \toprule
    \textbf{Dataset} & $L$ & $n_{\mathrm{hidden}}$ & Num. of parameters\\
    \cmidrule{1-4}
    DW4, LJ13 & $3$&$32$&$22468$\\
    LJ55 &$7$&$64$&$204936$\\
    alanine dipeptide &$5$&$64$&$147599$\\
    \bottomrule
  \end{tabular}
\end{table}

\begin{table}
  \caption{Training schedules}
  \label{tab:training-parameters}
  \centering
  \begin{tabular}{lcccc}
    \toprule
    \textbf{Training type} &  \textbf{Batch size} & \textbf{Learning rate} & \textbf{Epochs}&\textbf{Training time} \\
    \cmidrule{2-5}
        & \multicolumn{4}{c}{DW4}  \\
    \cmidrule{2-5}   
    Likelihood \cite{satorras2021n} & $256$ & $5e\text{-}4$ &$20$&$3$h \\
    OT flow matching & $256$ & $5e\text{-}4$/$5e\text{-}5$& $200$/$200$&$0.5$h\\
    Equivariant OT flow matching &$256$  &$5e\text{-}4$/$5e\text{-}5$ & $200$/$200$&$0.5$h\\
    \cmidrule{2-5}   
        & \multicolumn{4}{c}{LJ13} \\
    \cmidrule{2-5}   
    Likelihood \cite{satorras2021n} &  $64$  & $5e\text{-}4$  &$5$& $13$h\\
    OT flow matching & $256$ & $5e\text{-}4$/$5e\text{-}5$& $1000$/$1000$&$3$h\\
    Equivariant OT flow matching &$256$  &$5e\text{-}4$/$5e\text{-}5$ & $1000$/$1000$&$3$h\\
    \cmidrule{2-5}   
        & \multicolumn{4}{c}{LJ55} \\
    \cmidrule{2-5}   
    OT flow matching & $256$ & $5e\text{-}4$/$5e\text{-}5$& $600$/$400$&$17$h\\
    Equivariant OT flow matching &$256$  &$5e\text{-}4$/$5e\text{-}5$ & $600$/$400$&$17$h\\
    \cmidrule{2-5}   
        & \multicolumn{4}{c}{Alanine dipeptide} \\
    \cmidrule{2-5}   
    OT flow matching & $256$ & $5e\text{-}4$/$5e\text{-}5$& $1000$/$1000$&$6.5$h\\
    Equivariant OT flow matching  &$32$  &$5e\text{-}4$/$5e\text{-}5$ & $200$/$200$&$25$h\\
    (Batch reordering during training)&&&&\\
    \bottomrule
  \end{tabular}
\end{table}

\subsection{Parallel OT batch generation}\label{appendix:parallel-batch-generation}
As \cref{eq:approximated-cost-function} needs to be evaluated for each possible sample pair in a given batch, the computation cost of equivariant OT flow matching is quite high. However, a simple way to speed up the equivariant OT training is to generate the batches beforehand or in parallel to the training process. This process is highly parallelizable and can be performed on CPUs, which are in practice usually more available than GPUs and also in higher numbers. This also allows for larger batch sizes for the equivariant OT model and comes at little additional cost. Hence, scaling equivariant flow matching to even larger systems should not be an issue. The wall-clock time for the reordering of a single batch on a single CPU is reported in \cref{tab:batch-times}.

\begin{table}
  \caption{Wall-clock time to reorder a single batch for the two OT flow matching methods. The times for the DW4 and LJ13 system are below $0.01$ seconds for OT flow matching and are therefore not reported.}
  \label{tab:batch-times}
  \centering
  \begin{tabular}{lcc}
    \toprule
    \textbf{Training type} &  \textbf{Batch size} &\textbf{Wall-clock time} \\
    \cmidrule{2-3}
        & \multicolumn{2}{c}{DW4}  \\
    \cmidrule{2-3}   
    Equivariant OT flow matching &$256$  &$3.6$s\\
    \cmidrule{2-3}     
        & \multicolumn{2}{c}{LJ13} \\
    \cmidrule{2-3} 
    Equivariant OT flow matching &$256$  &$4.5$s\\
    \cmidrule{2-3}   
        & \multicolumn{2}{c}{LJ55} \\
    \cmidrule{2-3}   
    OT flow matching & $256$ &$0.01$s\\
    Equivariant OT flow matching &$256$  &$20.2$s\\
    \cmidrule{2-3}   
        & \multicolumn{2}{c}{Alanine dipeptide} \\
    \cmidrule{2-3}   
    OT flow matching & $256$ &$0.01$s\\
    Equivariant OT flow matching &$256$  &$22.4$s\\
    Equivariant OT flow matching &$32$  &$0.4$s\\
    \bottomrule
  \end{tabular}
\end{table}

\subsection{Biasing target samples}\label{appendix:biasing}
In the case of alanine dipeptide, the transition between negative and positive $\varphi$ dihedral angles is the slowest process, as depicted in Figure \ref{fig:AD2}d. Since the positive $\varphi$ state is less probable, we can introduce a bias in our training data to achieve a nearly equal density in both states. This approach aids in obtaining a more precise estimation of the free energy. To ensure a smooth density function, we incorporate weights based on the von Mises distribution $f_{\mathrm{vM}}$. The weights $\omega$ are computed along the $\varphi$ dihedral angle as
\begin{equation}
    \omega(\varphi)=150\cdot f_{\mathrm{vM}}\left(\varphi|\mu=1,\kappa=10 \right)+1.
\end{equation}

We then draw training samples based on the weighted distribution. 

\subsection{Effective samples sizes}\label{appendix:ESS}
The effective sample sizes are computed with Kish's equation \cite{wiegand1968kish}. We use $5\times 10^5$ samples for the DW4 and LJ13 system, $2\times 10^5$ for alanine dipeptide, and $1\times 10^5$ for the LJ55 system per model.

\subsection{Error bars}
Error bars in all plots are given by one standard deviation, averaged over 3 runs, if not otherwise indicated. The same applies for all errors reported in the tables.
Exceptions are the \textit{Mean batch transport cost} plots, where we average over $10$ batches and the \textit{integration error plots}, where we average over $100$ samples for each number of steps. 

\subsection{Computing infrastructure}\label{appendix:computing-infrastructure}
All experiments for the DW4 and LJ13 system were conducted on a \textit{GeForce GTX 1080 Ti} with 12 GB RAM. The training for alanine dipeptide and the LJ55 system were conducted on a \textit{GeForce RTX 3090} with 24 GB RAM.
Inference was performed on \textit{NVIDIA A100 GPUs} with 80GB RAM for alanine dipeptide and the LJ55 system. 
\end{document}